\definecolor{DarkGreen}{RGB}{1,150,32}
\begin{document}
\title{A Complete Algorithm for a Moving Target Traveling Salesman Problem with Obstacles}
%
%\titlerunning{Abbreviated paper title}
% If the paper title is too long for the running head, you can set
% an abbreviated paper title here
%
\begin{comment}
\author{Anoop Bhat\inst{1}\orcidID{0009-0005-7808-3701} \and
Geordan Gutow\inst{1}\orcidID{0000-0002-2137-6280} \and
Bhaskar Vundurthy\inst{1}\orcidID{0000-0002-6876-4424} \and
Zhongqiang Ren\inst{2}\orcidID{0000-0003-2880-8653} \and
Sivakumar Rathinam\inst{3}\orcidID{0000-0002-9223-7456} \and
Howie Choset\inst{1}\orcidID{0000-0002-2266-8744}}
\end{comment}
\author{Anoop Bhat\inst{1} \and
Geordan Gutow\inst{1} \and
Bhaskar Vundurthy\inst{1} \and
Zhongqiang Ren\inst{2} \and
Sivakumar Rathinam\inst{3} \and
Howie Choset\inst{1}}

\authorrunning{A. Bhat et al.}
\titlerunning{MTVG-TSP: A complete algorithm for MT-TSP-O}
% First names are abbreviated in the running head.
% If there are more than two authors, 'et al.' is used.
%
\institute{Carnegie Mellon University, Pittsburgh PA 15213, USA \and
Shanghai Jiao Tong University, Shanghai, China \and
Texas A\&M University, College Station, TX 77843}
\maketitle              % typeset the header of the contribution
\begin{abstract}
\emph{The moving target traveling salesman problem with obstacles} (MT-TSP-O) is a generalization of the traveling salesman problem (TSP) where, as its name suggests, the targets are moving. A solution to the MT-TSP-O is a \emph{trajectory} that visits each moving target during a certain time window(s), and this trajectory avoids stationary obstacles. We assume each target moves at a constant velocity during each of its time windows. The agent has a speed limit, and this speed limit is no smaller than any target's speed. This paper presents the first complete algorithm for finding feasible solutions to the MT-TSP-O. Our algorithm builds a tree where the nodes are agent trajectories intercepting a unique sequence of targets within a unique sequence of time windows. We generate each of a parent node's children by extending the parent's trajectory to intercept one additional target, each child corresponding to a different choice of target and time window. This extension consists of planning a trajectory from the parent trajectory's final point in space-time to a moving target. To solve this point-to-moving-target subproblem, we define a novel generalization of a visibility graph called a moving target visibility graph (MTVG). Our overall algorithm is called MTVG-TSP. To validate MTVG-TSP, we test it on 570 instances with up to 30 targets. We implement a baseline method that samples trajectories of targets into points, based on prior work on special cases of the MT-TSP-O. MTVG-TSP finds feasible solutions in all cases where the baseline does, and when the sum of the targets' time window lengths enters a critical range, MTVG-TSP finds a feasible solution with up to 38 times less computation time.

\keywords{Motion Planning \and  Traveling Salesman Problem \and Combinatorial Search}
\end{abstract}
\section{Introduction}
Given a set of targets and the travel costs between every pair of targets, the traveling salesman problem (TSP) seeks an order of targets for an agent to visit that minimizes the agent's total travel cost. In the moving target traveling salesman problem (MT-TSP) \cite{helvig2003moving}, the targets are moving through free space, and we seek not only an order of targets, but a trajectory for the agent intercepting each target. The agent's trajectory is subject to a speed limit and must intercept each target within a set of target-specific time intervals, called time windows. We consider the case where travel cost between targets is equal to the travel time: this cost is not fixed \emph{a priori}, as in the TSP, and instead depends on the time at which the agent intercepts each target. Prior work on the MT-TSP assumes that the agent's speed limit is no smaller than the speed of any target \cite{helvig2003moving,philip2023CStar,de2019experimental,wang2023moving}, and we make the same assumption in our work. When there are moving targets as well as obstacles for the agent to avoid, we refer to the problem as the moving target traveling salesman problem with obstacles (MT-TSP-O), shown in Fig. \ref{fig:intro_figure}.

\begin{comment}
The MT-TSP-O has several applications, such as underway replenishment of naval ships \cite{brown2017scheduling} and recharging of aerial \cite{Ding2019HeuristicBased} and underwater \cite{Li2019RendezvousPlanning} vehicles.
\end{comment}

\begin{comment}
{\color{red} Variants of MT-TSP naturally arise in search and rescue applications with moving targets where aerial robots are required to map or intercept moving targets...Add any other applications where the problem appears. This will end the first paragraph.}%Targets move at a constant velocity within each time window. 
\end{comment}

\begin{figure}
\vspace{-0.5cm}
\centering
\includegraphics[width=1\textwidth]{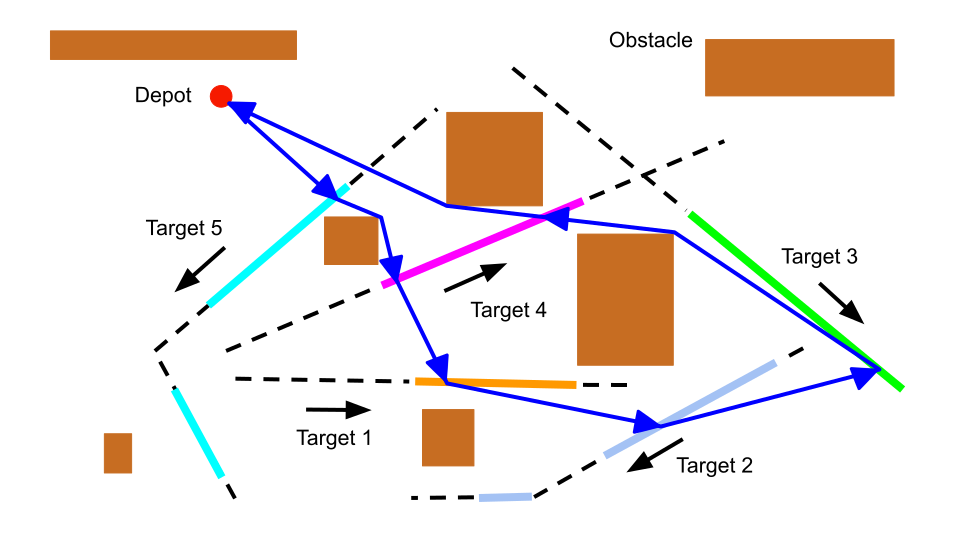}
\vspace{-0.5cm}
\caption{Targets move along trajectories with piecewise-constant velocities, which can be intercepted by agent during time windows depicted in bold colored lines. Agent’s trajectory shown in dark blue avoids obstacles, intercepts each target within its time window, and returns to start location (depot).} \label{fig:intro_figure}
\vspace{-0.5cm}
\end{figure}

Two properties we desire for an MT-TSP-O algorithm are completeness\footnote{Completeness refers to an algorithm's guarantee on finding a feasible solution when a problem instance is feasible or reporting infeasible in finite time otherwise \cite{choset2005principles, lavalle2006planning}.} and optimality. No algorithm for the MT-TSP-O in the literature has either of these properties. Guaranteeing completeness is complicated by the fact that even the problem of finding a feasible solution is NP-complete, since the MT-TSP-O generalizes the TSP with time windows (TSP-TW) \cite{savelsbergh1985local}. In this paper, we present the first complete algorithm for the MT-TSP-O.

Simpler cases of the MT-TSP-O have been addressed in the literature, with completeness guarantees in some cases. For example, in the absence of obstacles, \cite{philip2024mixedinteger} provides a complete and optimal solver for the MT-TSP assuming targets move at constant velocities. \cite{stieber2022multiple} provides a complete and optimal method when targets have piecewise-constant velocities. Heuristics are presented in \cite{bourjolly2006orbit, choubey2013moving, groba2015solving, ucar2019meta, jiang2005tracking, de2019experimental, 6760409, marlow2007travelling, wang2023moving} for variants of the MT-TSP, only guaranteeing completeness in the absence of time windows. In the presence of obstacles, there is one related work \cite{Li2019RendezvousPlanning} that considers the case where the agent is restricted to travel along a straight-line path when moving from one target to the next, providing an incomplete algorithm. A generic approach to solving these special cases of the MT-TSP-O is to sample the trajectories of targets into points, find agent trajectories between every pair of points, then select a sequence of points to visit by solving a generalized traveling salesman problem (GTSP) \cite{philip2023CStar, stieber2022DealingWithTime, Li2019RendezvousPlanning}. This approach is not complete, since it may only be possible for the agent to intercept some target at a time that is in between two of the sampled points in time representing the target's trajectory.

We develop a new algorithm for the MT-TSP-O that guarantees completeness. Our algorithm, called MTVG-TSP, leverages a novel generalization of a visibility graph, which we call a moving target visibility graph (MTVG), which enables us to plan a trajectory from a starting point in space-time to a moving target. Given a sequence of time windows of targets, we can find a minimum-time agent trajectory intercepting each target within its specified time window via a sequence of A* searches, each on a MTVG. In particular, we can do so without sampling any target's trajectory, avoiding the limitations of prior work. By performing a higher level search for a sequence of time windows and computing an agent trajectory for each generated sequence, MTVG-TSP finds an MT-TSP-O solution if one exists. We extensively test our algorithm on problem instances with up to 30 targets, varying the length and number of time windows, and we compare our algorithm's computation time to a method based on prior work \cite{philip2023CStar,stieber2022DealingWithTime,Li2019RendezvousPlanning} that samples trajectories of targets into points. We demonstrate that when the sum of the time window lengths for each target enters a critical range, the sampled-points method requires an excessive number of sample points to find a feasible solution, while our method finds solutions relatively quickly.

\section{Problem Setup}\label{sec:problem_setup}
We consider a single agent and $N_\tau$ targets, all moving on a 2D plane ($\mathbb{R}^2$). The trajectory of target $i\in [N_\tau]$\footnote{For a positive integer $x$, $[x]$ denotes the set $\{1,2,\cdots,x\}$.} is denoted as $\tau_i: \mathbb{R}^+ \rightarrow \mathbb{R}^2$. Each target has a set of $N_i$ time windows $\{w_{i, 1}, w_{i, 2}, \dots, w_{i, N_i}\}$, where $w_{i, j} = [t_{i, j}^0, t_{i, j}^f]$ is the $j^{th}$ time window of target $i$. Target $i$ moves at a constant velocity within each of its time windows, but its velocity may be different for each time window. Given a final time $T^f$, denote the trajectory for the agent as $\tau_A : [0, T^f] \rightarrow \mathbb{R}^2$. $\tau_A$ must start and end at a given point called the depot denoted as $d$ with position $p_{d} \in \mathbb{R}^2$. The agent can move in any direction with a speed at most $v_{max}$.

Let $\{O_1, O_2, \dots O_{N_O}\}$ denote the set of obstacles where $N_O$ denotes the number of obstacles. We define $\Psi(t^0, t^f)$ as the set of all the feasible agent trajectories defined on the time interval $[t^0, t^f]$ such that for any time $t\in [t^0, t^f]$, the agent satisfies the speed constraint and its position never enters the interior of any obstacle. We assume that within target $i$'s time windows, target $i$ does not move with speed greater than $v_{max}$ and does not enter the interior of any obstacle.

We say that a trajectory $\tau_A$  for the agent \textit{intercepts} target $i\in [N_\tau]$ if there exists a time $t$ such that for some $j \in [N_{i}]$, $t \in w_{i, j}$ and $\tau_A(t) = \tau_i(t)$. We define the MT-TSP-O as the problem of finding a final time $T^f$ and an agent trajectory $\tau_A \in \Psi(0, T^f)$ such that $\tau_A$ starts and ends at the depot, intercepts each target $i\in [N_\tau]$, and $T^f$ is minimized. 

\section{MTVG-TSP Algorithm}\label{sec:algorithm}
The MTVG-TSP algorithm interleaves a higher-level search on a {\it time window graph} and a lower-level search on a \textit{moving target visibility graph} (MTVG). The nodes in the time window graph, called {\it window-nodes}, each represent either the depot or a pairing of a target with one of its time windows. A feasible solution to the MT-TSP-O corresponds to a cycle in the time window graph containing the depot and exactly one window-node per target. Our algorithm finds such a cycle by building a \textit{trajectory tree}, where each \textit{tree-node} contains a sequence of window-nodes and an associated agent trajectory. A tree-node's children are each generated in two steps. First, we append a window-node to the end of the tree-node's window-node sequence. Then we extend the tree-node's agent trajectory to intercept the appended window-node's target. We perform this trajectory extension via the MTVG. In particular, we construct the MTVG by augmenting a standard visibility graph with the window-node whose target we aim to intercept. Extending a tree-node's trajectory consists of planning a path in the MTVG from the final point in the tree-node's trajectory to the added window-node. When we have a tree-node with a trajectory that intercepts all targets and returns to the depot, we have a solution to the MT-TSP-O.

\subsection{Window-Nodes}\label{sec:window_node_def}
Since we will be constructing two novel graphs, the time window graph and the moving target visibility graph, each containing a common type of node called a \textit{window-node}, we define window-nodes here. A window-node $s = (i, t^0_{i, j}, t^f_{i, j})$ is an association of a target $i$ with one of its time windows $w_{i, j} = [t^0_{i, j}, t^f_{i, j}]$. The set of all window nodes is $V_{tw} = \{(0, 0, \infty)\} \cup \bigcup\limits_{i \in [N_\tau]}\bigcup\limits_{j \in [N_{i}]}\{(i, t_{i, j}^0, t_{i, j}^f)\}$. In addition to having a window-node for every possible pair of target and time window, we have a window-node $s_d = (0, 0, \infty)$ for the depot. The depot is viewed as a fictitious target 0 with $\tau_0(t) = p_d$ for all $t$. For each window-node $s = (i, t_{i, j}^0, t_{i, j}^f)$, we define the functions $\text{targ}(s) = i$, $t^0(s) = t_{i, j}^0$, and $t^f(s) = t_{i, j}^f$. We say an agent trajectory $\tau_A$ \textit{intercepts} $s \in V_{tw}$ at time $t \in [t^0(s), t^f(s)]$ if $\tau_A(t) = \tau_{\text{targ}(s)}(t)$.

\subsection{Initial Visibility Computations}\label{sec:init_visibility}
Next, we describe two initial data structures needed to construct the moving target visibility graph. The first is a standard visibility graph $G_{vis} = (V_{vis}, E_{vis})$ \cite{lozano1979algorithm}. Let $V_O \subseteq \mathbb{R}^2$ be the set of convex vertices of all the obstacles\footnote{A convex obstacle vertex is is a vertex where the internal angle between the two incident edges is less than $\pi$ radians. Using only convex vertices in a visibility graph reduces graph size without discarding shortest paths through the graph \cite{liu1992path}}. The set of nodes $V_{vis}$ in $G_{vis}$ contains vertices in $V_O$, the depot position, and the positions of each target's trajectory at the beginning and end of each of its time windows, i.e. $V_{vis} := V_O \cup \{p_d\} \cup \{\tau_{\text{targ}(s)}(t^0(s)) : s \in V_{tw}\setminus \{s_d\}\} \cup \{\tau_{\text{targ}(s)}(t^f(s)) : s \in V_{tw}\setminus \{s_d\}\}$. We draw an edge from $q \in V_{vis}$ to $q' \in V_{vis}$ if $q'$ is contained in the visibility polygon of $q$, denoted as $\text{vpoly}(q)$ (computed using CGAL \cite{cgal:eb-24a}).

The second data structure encodes visibility relationships between points in space and window-nodes. In particular, for each $q \in V_{vis}$, $s \in V_{tw}$, we compute a \textit{visible interval set} $\text{vis}(q, s)$, containing every interval $I \subseteq [t^0(s), t^f(s)]$ such that for all $t \in I$, we have $\tau_{\text{targ}(s)}(t) \in \text{vpoly}(q)$. We illustrate $\text{vis}(q, s)$ in Fig. \ref{fig:vis_def} (a). The set of all visible interval sets is $\Lambda_{vis} = \{\text{vis}(q, s) : \forall q \in V_{vis}, s \in V_{tw}\}$.

\begin{figure}
\vspace{-0.5cm}
\centering
\includegraphics[width=\textwidth]{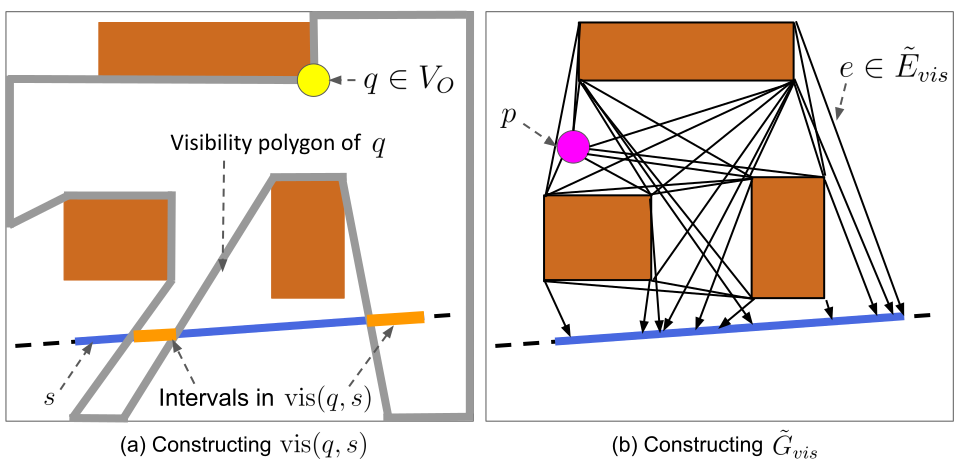}
\vspace{-0.5cm}
\caption{(a) We compute $\text{vis}(q, s)$ for all $q \in V_O$ and $s \in V_{tw}$. (b) In the moving target visibility graph (MTVG) associated with $p$ and $s$, we draw an edge from $q$ to $s$ if $\text{vis}(q, s) \neq \emptyset$. All edges between position-nodes are bidirectional, but edges from position-nodes to $s$ are unidirectional: there are no edges leaving $s$. The positions of endpoints of edges on $s$ are drawn arbitrarily, since the agent's position after traversing edge $(q, s)$ depends on the time when the agent leaves $q$.} \label{fig:vis_def}
\vspace{-0.5cm}
\end{figure}

\subsection{Moving Target Visibility Graph}\label{subsec:mtvg}
A recurring subproblem in our algorithm is to find a trajectory from a point $(p, T)$ in space-time that intercepts a particular window-node $s$ in minimum time. For each of these subproblems, we define a graph $\tilde{G}_{vis} = (\tilde{V}_{vis}, \tilde{E}_{vis})$ called a \textit{moving target visibility graph} (MTVG). The set of nodes is $\tilde{V}_{vis} = V_{vis} \cup \{p, s\}$, where nodes in $V_{vis} \cup \{p\}$ are called \textit{position-nodes}, and $s$ is the window-node we aim to intercept. The set of edges is $\tilde{E}_{vis} = E_{vis} \cup E_p \cup E_s$. We construct $E_p$ by drawing edges from $p$ to all $q \in V_{vis}$ such that $q \in \text{vpoly}(p)$; if we already have $p \in V_{vis}$, we can skip this step, since all possible edges from $p$ to $q \in V_{vis}$ already exist in $E_{vis}$. Constructing $E_s$ consists of two steps. First, we compute $\text{vis}(p, s)$; if $p \in V_{vis}$, we can skip this step, because we computed $\text{vis}(p, s)$ in Section \ref{sec:init_visibility}. Second, we draw an edge to $s$ from any position-node $q$, including $p$, such that $\text{vis}(q, s) \neq \emptyset$, as shown in Fig. \ref{fig:vis_def} (b).

The cost of edge $(u, v) \in \tilde{E}_{vis}$ depends on $u$, $v$, and a time variable $t$ representing the time at which the agent departs from node $u$. We denote the cost of $(u, v)$ at time $t$ as $\tilde{c}_{vis}(u, v, t)$. For edges between position-nodes $u$ and $v$, the time variable is not used: the edge cost is simply equal to the agent's minimum travel time from $u$ to $v$: $\tilde{c}_{vis}(u, v, t) = \frac{\|u - v\|_2}{v_{max}} \; \forall t$. For any edge $e_s = (q, s) \in E_s$, the time variable is used to compute the cost as follows:
\begin{align}
    \tilde{c}_{vis}(q, s, t) &= \min\limits_{I \in \text{vis}(q, s)} SFT(q, t, s, I)\label{eqn:cost_of_last_edge}
\end{align}
where $SFT$ stands for shortest feasible travel, as in \cite{philip2023CStar}. "Shortest" refers to shortest time. The SFT from point $(q, t)$ to window-node $s$ on interval $I$ is the optimal cost of optimization Problem \ref{optprob:sft_problem}:
\begin{mini!}
{t_s \in I}{t_s - t}{\label{optprob:sft_problem}}{SFT(q, t, s, I) = }
\addConstraint{\frac{\|\tau_{\text{targ}(s)}(t_s) - q\|}{t_s - t} \leq v_{max}}{}
\end{mini!}
Problem \ref{optprob:sft_problem} computes the minimum travel time of a straight-line trajectory starting from $(q, t)$ and intercepting $s$ within interval $I$, and can be solved in closed-form using methods from \cite{philip2023CStar}. The SFT computation does not consider obstacle-avoidance, but in our case, it does not need to: any straight-line trajectory from $q$ to $s$ intercepting $s$ within some $I \in \text{vis}(q, s)$ is already obstacle-free.

After constructing $\tilde{G}_{vis}$, we find a minimum-time trajectory from $(p, T)$ to $s$ via an A* search \cite{hart1968AFormalBasis} from $p$ to $s$ on $\tilde{G}_{vis}$, shown in Alg. \ref{alg:point_to_moving_target_planning_alg}. Since edge costs in $\tilde{G}_{vis}$ encode travel time, the g-value $g(v)$ for a node $v$ is the shortest travel time out of all paths to $v$ that A* has explored so far. $T + g(v)$ is then the earliest known arrival time to node $v$. In Line \ref{algline:gcand_computation}, we use this arrival time to compute edge costs from $v$ to its successors. The heuristic $h(q)$ for a position-node $q$ is the Euclidean distance from $q$ to the spatial line segment defined by $s$, divided by $v_{max}$, underestimating the travel time from $q$ to $s$. We set $h(s) = 0$. When A* finds a path to $s$, the ConstructTrajectory function (Line \ref{algline:reconstruct_trj}) performs a standard backpointer traversal to obtain a path $Q = (p, q^1, \dots, q^{N - 1}, s)$ through $\tilde{G}_{vis}$, moves the agent at max speed between each position-node in $Q$, and finally executes the straight-line trajectory associated with $\tilde{c}_{vis}(q^{N - 1}, s, T + g(q^{N - 1}))$. The result is a minimum-time trajectory from $(p, T)$ to $s$, if one exists. If such a trajectory does not exist, the condition $g_{cand}(v') \leq t^f(s) - T$ on Line \ref{algline:check_before_add_to_open} for adding $v'$ to OPEN ensures the search terminates, described in Section \ref{sec:theory}.

\vspace{0.1cm}
\begin{algorithm}[H]\label{alg:point_to_moving_target_planning_alg}
\SetKwFunction{PointToMovingTargetSearch}{PointToMovingTargetSearch}
\SetKwFunction{ConstructMTVG}{ConstructMTVG}
\caption{A* search from initial point $(p, T)$ to window-node $s$}
\SetKwProg{Fn}{Function}{:}{}
\Fn{\PointToMovingTargetSearch{$p, T, s, G_{vis}, \Lambda_{vis}$}} {
    $\tilde{G}_{vis} = (\tilde{V}_{vis}, \tilde{E}_{vis}) = $ \ConstructMTVG($p, s, G_{vis}, \Lambda_{vis}$)\;
    OPEN = []\;
    CLOSED = []\;
    Insert $p$ into OPEN with $f(p) = 0$\;
    Set $g(v) = \infty$ for all $v \in \tilde{V}_{vis}$ with $v \neq p$.
    Set $g(p) = 0$.
    
    \While{\upshape OPEN is not empty and $f(s) > \min\limits_{v \in \text{OPEN}}f(v)$} {
      Remove $v$ with smallest $f(v)$ from OPEN\;\label{algline:before_expand}
      Insert $v$ into CLOSED\;
    
      \For{\upshape $v'$ in $v$.successors()} {
        $g_{cand}(v') = g(v) + \tilde{c}_{vis}(v, v', T + g(v))$\;\label{algline:gcand_computation}
        \If{\upshape $g_{cand}(v') < g(v')$ and $v' \notin$ CLOSED and $g_{cand}(v') \leq t^f(s) - T$} {\label{algline:check_before_add_to_open}
          $g(v') = g_{cand}(v')$\;
          Insert $v'$ into OPEN with $f(v') = g(v') + h(v')$\;
        }
      }
    }
    
    \If{$f(s) \neq \infty$} {
        \Return ConstructTrajectory($s$), $g(s) + T$, FEASIBLE\;\label{algline:reconstruct_trj}
    }
    \Return NULL, $\infty$, INFEASIBLE\;\label{eqn:return_infeasible}
} 
\end{algorithm}

\subsection{Time Window Graph}\label{sec:twg}
\sloppy The MTVG defined in Section \ref{subsec:mtvg} enables finding a minimum-time trajectory intercepting a single window-node. We will need to chain these trajectory computations together to intercept a sequence of window-nodes, containing one window-node per target. To determine this sequence, we define a {\it time window graph} denoted as $G_{tw} = (V_{tw}, E_{tw})$. The set of nodes in $G_{tw}$ is the set of all window nodes $V_{tw}$ from Section \ref{sec:window_node_def}. \sloppy We add an edge $(u, v)$ to $E_{tw}$ if there exists an agent trajectory that intercepts $u$ at time $t_u$ and $v$ at time $t_v$ with $t_u \leq t_v$, satisfying speed limit and obstacle avoidance constraints. In particular, we search for such a trajectory that departs at the latest feasible departure time from $u$ to $v$, denoted as $LFDT(u, v)$. Extending the idea from \cite{philip2023CStar}, $LFDT(u, v)$ is the optimal cost of optimization Problem \ref{optprob:lfdt_problem}: 
\begin{maxi!}
{\substack{t_u \in [t^0(u), t^f(u)],\\\tau_A}}{t_u}{\label{optprob:lfdt_problem}}{LFDT(u, v) = }
\addConstraint{\tau_A \in \Psi(t_u, t^f(v))\label{eqn:lfdt_avoid_obs_and_speed_limit}}{}
\addConstraint{\tau_A(t_u) = \tau_{\text{targ}(u)}(t_u) \label{eqn:lfdt_start_on_u}}{}
\addConstraint{\tau_A(t^f(v)) = \tau_{\text{targ}(v)}(t^f(v))\label{eqn:lfdt_end_on_v}}{}.
\end{maxi!}

\begin{remark}
Constraint \eqref{eqn:lfdt_end_on_v} requires the agent to meet $v$ at its end time $t^f(v)$, raising the question of whether the agent could depart later from $u$ if we relaxed \eqref{eqn:lfdt_end_on_v} and only required the agent to meet $v$ at some time $t_v \in [t^0(v), t^f(v)]$. Since we assumed $\tau_{\text{targ}(v)}$ neither exceeds the agent's maximum speed nor enters the interior of an obstacle during $\text{targ}(v)$'s time window, this relaxation would not let the agent depart later from $u$. If the agent can depart at time $t_u$ and meet $\text{targ}(v)$ at some time $t_v \in [t^0(v), t^f(v)]$, the agent can follow $\tau_{\text{targ}(v)}$ from time $t_v$ to time $t^f(v)$, thereby meeting $v$ at time $t^f(v)$ as well.
%\vspace{-0.7cm}
\end{remark}

Problem \ref{optprob:lfdt_problem} seeks a trajectory that starts by intercepting a window-node and terminates at a prescribed point. We transform Problem \ref{optprob:lfdt_problem} via a time reversal to instead seek a trajectory that starts at a prescribed point and terminates by intercepting a window-node. We do so because the transformed problem can be solved using Alg. \ref{alg:point_to_moving_target_planning_alg}. The transformation defines a fictitious node $\underline{u} = (-\text{targ}(u), -t^f(u), -t^0(u))$ associated with fictitious target $\text{targ}(\underline{u}) = -\text{targ}(u)$. The trajectory of $\text{targ}(\underline{u})$ is $\tau_{-\text{targ}(u)}(t) = \tau_{\text{targ}(u)}(-t)$. Consider a reversed time variable $\underline{t} = -t$. An agent trajectory that departs as late as possible from $u$, measured in conventional time $t$, arrives at $\underline{u}$ as early as possible, measured in reversed time $\underline{t}$. We find $LFDT(u, v)$ by solving the transformed Problem \ref{optprob:lfdt_problem_transform}:
\begin{mini!}
{\substack{\underline{t}_u \in [t^f(\underline{u}), t^0(\underline{u})],\\\underline{\tau}_A}}{\underline{t}_u}{\label{optprob:lfdt_problem_transform}}{LFDT(u, v) = -}
\addConstraint{\underline{\tau}_A \in \Psi(-t^f(v), \underline{t}_u)\label{eqn:lfdt_avoid_obs_and_speed_limit_transform}}{}
\addConstraint{\underline{\tau}_A(t_u) = \tau_{\text{targ}(\underline{u})}(\underline{t}_u) \label{eqn:lfdt_start_on_u_transform}}{}
\addConstraint{\underline{\tau}_A(-t^f(v)) = \tau_{\text{targ}(v)}(t^f(v)).\label{eqn:lfdt_end_on_v_transform}}{}
\end{mini!}
We solve Problem \ref{optprob:lfdt_problem_transform} using Alg. \ref{alg:point_to_moving_target_planning_alg}, and if we find a feasible solution, we draw an edge from $u$ to $v$ in $G_{tw}$ and store $LFDT(u, v)$ with that edge.

\subsection{Trajectory Tree}\label{subsec:feas_trj_gen}
Any feasible agent trajectory $\tau_A$ for the MT-TSP-O must intercept each target during one of its time windows before returning to the depot. This means that $\tau_A$ intercepts a sequence of window-nodes $(s^1, s^2, \dots, s^{N_\tau})$ containing exactly one window-node per target, implying the existence of the cycle $S = (s_d, s^1, s^2, \dots, s^{N_\tau}, s_d)$ in $G_{tw}$. In this paper, we use the depth-first search (DFS) in Alg. \ref{alg:dfs} to find a feasible trajectory and its corresponding cycle (if one exists). While we apply DFS in this work to quickly find feasible solutions, other search methods can be used as well, e.g. best-first search to find the global optimum.

Alg. \ref{alg:dfs} maintains a stack that stores tuples $(S, \tau_A, T)$, where $S$ is a path (sequence of window-nodes) through $G_{tw}$, and $(\tau_A, T)$ is a partial solution to the MT-TSP-O, in that $(\tau_A, T)$ starts at the depot, avoids obstacles, satisfies the speed limit, and intercepts each window-node in $S$ in order. As the search proceeds, we construct a tree of these tuples, which we call a \textit{trajectory tree}. We refer to tuples $(S, \tau_A, T)$ as \textit{tree-nodes}. Each loop iteration begins by popping a tree-node $(S, \tau_A, T)$ from the stack (Line \ref{algline:pop}), then iterating over the successor window-nodes of $(S, T)$ (Line \ref{algline:get_succ_tw}), where $s' \in G_{tw}$ is a successor window-node of $(S, T)$ if and only if all of the following conditions hold:
\begin{enumerate}
    \item $(s, s') \in E_{tw}$ and $T \leq LFDT(s, s')$, $s$ being the terminal window-node of $S$.\label{enumitem:succ_tw_cond1}
    \item $\text{targ}(s') \neq \text{targ}(s)$ for any $s$ in $S$.
    \item If $s' = s_d$, then $S$ contains one window-node per target.
\end{enumerate}

\vspace{0.5cm}
\begin{algorithm}[H]\label{alg:dfs}
\caption{Constructing a cycle through $G_{tw}$ and a trajectory intercepting each node in the cycle. See Alg. \ref{alg:point_to_moving_target_planning_alg} for the PointToMovingTargetSearch function. See Section \ref{subsec:feas_trj_gen} for details on the Lookahead function.}

\SetKwFunction{DFS}{DFS}
\SetKwFunction{AugmentVisibilityGraph}{AugmentVisibilityGraph}
\SetKwFunction{PointToMovingTargetSearch}{PointToMovingTargetSearch}
\SetKwFunction{Lookahead}{Lookahead}
\SetKwFunction{GetSuccessorWindowNodes}{GetSuccessorWindowNodes}
\SetKwFunction{Append}{Append}
\SetKwFunction{ConcatenateTrajectories}{ConcatenateTrajectories}
\SetKwProg{Fn}{Function}{:}{}
\SetKwComment{Comment}{// }{}

\Fn{\DFS{$S, G_{vis}, \Lambda_{vis}$}} {
    STACK = [ $(s_d, NULL, 0)$ ]\; \label{algline:init_stack}
    \While{\upshape STACK is not empty}{\label{algline:check_stack_empty}
        $(S, \tau_A, T) = $ STACK.pop()\;\label{algline:pop}
        SUCCESSORS = []\;
        \For{\upshape $s'$ in $\GetSuccessorWindowNodes(S, T)$}{\label{algline:get_succ_tw}
            $\bar{\tau}_{A}'$, $T'$, status = \PointToMovingTargetSearch($\tau_A(T), T, s', G_{vis}, \Lambda_{vis}$)\;\label{algline:pt_moving_target_result}
            $S' = $ \Append(S, s')\;\label{algline:append_sprime}
            \If{\upshape \Lookahead($S', T'$) is INFEASIBLE}{\label{algline:fail_check}
              continue\;
            }
            $\tau_A' = $ \ConcatenateTrajectories($\tau_A, \bar{\tau}_A')$\;\label{algline:concatenate_trj}
            \If{\upshape $s'$ is $s_d$}{
              return $\tau_A', T'$ FEASIBLE\;\label{algline:return_mt_tsp_o_soln}
            }
            SUCCESSORS.append($(S', \tau_A', T')$)\;\label{algline:add_succ}
        }
        \Comment{Add successors to stack in order of decreasing final time}
        SUCCESSORS.sortLargestToSmallestT()\;\label{algline:sort_by_T}
        \For{\upshape $(S', \tau_A', T')$ in SUCCESSORS}{
            STACK.push(($S', \tau_A', T'$))\;
        }\label{algline:end_stack_push}
    }
    return NULL, $\infty$, INFEASIBLE\;\label{algline:term_infeas}
} 
\end{algorithm}
\vspace{0.5cm}

For each successor window-node $s'$, we generate a successor tree-node $(S', \tau_A', T')$ by generating a trajectory $\bar{\tau}_A'$ that begins at $(\tau_A(T), T)$ and intercepts $s'$. We plan $\bar{\tau}_A'$ using Alg. \ref{alg:point_to_moving_target_planning_alg}, obtaining final time $T' = T + g(s')$. Condition \ref{enumitem:succ_tw_cond1} in the definition of a successor window-node ensures that we always find a trajectory in this step. Next, we generate a path $S'$ through $G_{tw}$ (or a cycle through $G_{tw}$, if $s' = s_d$) by appending $s'$ to $S$ (Alg. \ref{alg:dfs}, Line \ref{algline:append_sprime}). Then we perform a check denoted as Lookahead in Line \ref{algline:fail_check}. In particular, let $\Gamma_{unvisited}$ be the set of targets unvisited by $S' = (s_d, s^1, s^2, \dots, s')$:
\begin{align}
    \Gamma_{unvisited} = [N_\tau] \setminus \{\text{targ}(s^i) : i \in [|S'| - 1]\}.
\end{align}
where $|S'|$ is the length of $S'$. If inequality \eqref{eqn:lookahead_condition} holds for some $i \in \Gamma_{unvisited}$,
\begin{align}
    T' > \max\limits_{s'' \in \{s''' \in V_{tw} : \text{targ}(s''') = i\}}LFDT(s', s'')\label{eqn:lookahead_condition}.
\end{align}
then the agent has arrived at $s'$ too late to intercept target $i$ in the future, making it impossible to intercept the set of unvisited targets. Thus we do not add $S'$ as a successor to $S$. This Lookahead check is analogous to Test 1 in \cite{Dumas1995OptimalAlgorithm}. While it is not needed for completeness, it reduces computation time.

If the Lookahead check succeeds, we concatenate $\bar{\tau}_A'$ with $\tau_A$ to obtain trajectory $\tau_A'$, and thereby a partial solution $(\tau_A', T')$ corresponding to $S'$ (Line \ref{algline:concatenate_trj}. If $s' = s_d$, then $(\tau_A', T')$ is a feasible solution to the MT-TSP-O and we return (Line \ref{algline:return_mt_tsp_o_soln}). Otherwise, we add $(S', \tau_A', T')$ to the list of successors of $S$ (Line \ref{algline:add_succ}). After obtaining all successors, we push the successors onto the stack in order of decreasing final time (Lines \ref{algline:sort_by_T}-\ref{algline:end_stack_push}), so the next popped tree-node will be the successor of $(S, \tau_A, T)$ with the earliest final time.

\section{Theoretical Analysis}\label{sec:theory}
In this section, we state MTVG-TSP's completeness theorems and sketch the proofs, providing full proofs in Appendix \ref{sec:appendix} in the supplementary material. 

\begin{theorem}\label{theorem:optimally_solve_subproblem}
Alg. \ref{alg:point_to_moving_target_planning_alg} finds a minimum-time trajectory beginning at $(p, T)$ and intercepting $s$, if one exists.
\end{theorem}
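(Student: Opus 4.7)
The plan is to establish two claims: first, that whenever a trajectory from $(p,T)$ intercepting $s$ exists, a minimum-time such trajectory corresponds to a path in the MTVG $\tilde{G}_{vis}$, and second, that the A* search in Alg. \ref{alg:point_to_moving_target_planning_alg} correctly recovers such a path in finite time regardless of whether a solution exists.

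For the first claim, I would argue that any feasible trajectory from $(p,T)$ to a point of interception on $\tau_{\text{targ}(s)}$ can be restricted, without increasing the terminal time, so that the agent moves at $v_{max}$ along a polygonal spatial path whose interior bending points are convex obstacle vertices, concluding with a single straight segment that catches the target. The classical shortest-path-in-polygon argument \cite{lozano1979algorithm} handles the initial portion: a time-optimal agent traverses an obstacle-avoiding spatial shortest path from $p$ to some launch vertex, using only nodes in $V_{vis} \cup \{p\}$. The final straight chase terminates at the target at some intercept time $t_s$ lying in an interval of $\text{vis}(q, s)$ for its launch vertex $q$; this is captured exactly by the last edge $(q, s) \in E_s$, whose cost $\tilde{c}_{vis}(q, s, \cdot)$ is by definition the minimum $SFT$ over intervals in $\text{vis}(q, s)$. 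Hence the optimum continuous cost equals the optimum cost of a path from $p$ to $s$ in $\tilde{G}_{vis}$.

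For the second claim I would verify that A* is being applied to a well-formed problem with an admissible (in fact consistent) heuristic. The heuristic $h(q) = d(q, L_s)/v_{max}$, where $L_s$ is the spatial segment swept by $\tau_{\text{targ}(s)}$ during the window of $s$, underestimates travel time because the agent has speed at most $v_{max}$ and must spatially reach $L_s$. Consistency for position-to-position edges follows from the triangle inequality, and for the terminal edge $(q,s)$ from the fact that the SFT straight-line distance is at least $d(q, L_s)$. The only nonstandard aspect is the time-dependence of $\tilde{c}_{vis}(q, s, t)$; I would handle this by observing that the optimal intercept time $t_s^{*}$ returned by $SFT(q, t, s, I)$ is monotone nondecreasing in the departure time $t$ (arriving earlier at $q$ can only enlarge the feasible set of intercept times), so minimizing $g(q)$ over position-nodes also minimizes the resulting $g(s) = t_s^{*} - T$. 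Standard A* optimality then guarantees that when a feasible trajectory exists, the path A* extracts to $s$ is of minimum time, and ConstructTrajectory assembles it from the backpointers together with the stored last-edge SFT.

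The remaining piece, and the part I expect to be the main technical obstacle, is termination when no solution exists. The guard $g_{cand}(v') \leq t^f(s) - T$ on Line \ref{algline:check_before_add_to_open} is essential here: every inserted node has $g$-value in the bounded interval $[0, t^f(s) - T]$, $g$-values strictly decrease on re-insertion, and $\tilde{V}_{vis}$ is finite, so each node can be inserted into OPEN only finitely many times, forcing termination. I would then need to confirm that this guard never discards a node lying on a truly optimal path: any such node admits a continuation reaching $s$ at some time $t_s \leq t^f(s)$, hence satisfies $g_{cand}(v') \leq t_s - T \leq t^f(s) - T$, so no optimal path is pruned.
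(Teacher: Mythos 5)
Your proposal is correct and follows essentially the same route as the paper: reduce the point-to-moving-target problem to shortest-path search on the MTVG via the polygonal structure of shortest obstacle-free paths (the paper's Lemma \ref{lemma:feas_soln_implies_finite_cost_path_in_G_tw}), and neutralize the time-dependent last-edge cost with exactly the monotonicity of the SFT arrival time in the departure time (Lemma \ref{lemma:start_earlier_implies_arrive_at_same_time_worst_case}), which is what lets the paper exhibit an optimal path all of whose prefixes are optimal (Lemma \ref{lemma:path_exists_with_optimal_prefix}) and then rerun the standard A* admissibility argument (Lemma \ref{lemma:open_contains_opt_node}). The only compression is your appeal to ``standard A* optimality,'' which strictly speaking must be re-verified on that specially constructed prefix-optimal path---an arbitrary optimal path through $\tilde{G}_{vis}$ need not have optimal prefixes, as the paper's example shows---but your monotonicity observation is precisely the ingredient that makes this verification go through.
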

Theorem \ref{theorem:optimally_solve_subproblem}'s proof is similar to the admissibility proof of A* \cite{hart1968AFormalBasis}, differing since the cost of an edge $(q, s)$ in the MTVG depends on $g(q)$ in the A* search. In general, such path-dependent edge costs can make it necessary for an optimal path in a graph to take a suboptimal path to some intermediate node. The bulk of Theorem \ref{theorem:optimally_solve_subproblem}'s proof lies in showing that this is never necessary in the MTVG.

\begin{definition}
Let $(\tau_A^*, T^*)$ be a solution to the MT-TSP-O. $(\tau_A, T)$ is a \textnormal{prefix} of $(\tau_A^*, T^*)$ if $\tau_A(t) = \tau_A^*(t)$ for all $t \in [0, T]$ and $T \leq T^*$.
\end{definition}

\begin{theorem}\label{theorem:completeness_thm_feas}
If an MT-TSP-O instance is feasible, MTVG-TSP finds a feasible solution.
\end{theorem}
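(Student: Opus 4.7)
The plan is to prove completeness by induction on the prefix length of a fixed feasible solution's window-node sequence, showing that MTVG-TSP's DFS either terminates early with some feasible solution or eventually pushes onto its stack a tree-node matching that prefix with an arrival time no later than the reference solution's. Fix an arbitrary feasible solution $(\tau_A^*, T^*)$ with window-node cycle $S^* = (s_d, s^{*1}, \ldots, s^{*N_\tau}, s_d)$, and let $t^*_k$ denote the time at which $\tau_A^*$ intercepts $s^{*k}$, with $t^*_0 = 0$. The induction claim is: for every $k \in \{0, 1, \ldots, N_\tau\}$, either the DFS has already returned, or at some point it pushes onto the stack a tree-node $(S_k, \tau_A^k, T_k)$ with $S_k = (s_d, s^{*1}, \ldots, s^{*k})$ and $T_k \leq t^*_k$.

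The base case $k = 0$ is the initialization at Line \ref{algline:init_stack}. For the inductive step, I would first use finiteness of the trajectory tree and exhaustiveness of DFS (absent early termination) to argue that $(S_k, \tau_A^k, T_k)$ is eventually popped. Upon its expansion, three facts must be verified: (a) $s^{*k+1}$ survives the \textsc{GetSuccessorWindowNodes} filter; (b) \textsc{PointToMovingTargetSearch} returns a trajectory whose arrival time $T_{k+1}$ satisfies $T_{k+1} \leq t^*_{k+1}$; and (c) the \textsc{Lookahead} check does not reject the successor. Facts (a) and (c) follow directly from $S^*$ being feasible: for (a), the portion of $\tau_A^*$ on $[t^*_k, t^*_{k+1}]$ followed by a tail along $\tau_{\text{targ}(s^{*k+1})}$ to time $t^f(s^{*k+1})$ witnesses that $(s^{*k}, s^{*k+1}) \in E_{tw}$ with $LFDT(s^{*k}, s^{*k+1}) \geq t^*_k \geq T_k$ (the boundary constraint \eqref{eqn:lfdt_end_on_v} being handled by this tail), while target distinctness and the depot condition are inherited from $S^*$ visiting each target exactly once; for (c), each remaining target is intercepted by $\tau_A^*$ at a time no earlier than $T_{k+1}$, so the relevant $LFDT$ out of $s^{*k+1}$ is at least $T_{k+1}$.

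Fact (b) is the crux. The key observation is that $\tau_A^k(T_k) = \tau_{\text{targ}(s^{*k})}(T_k)$, because by induction the partial trajectory intercepts $s^{*k}$ at time $T_k$. The standing assumption that $\tau_{\text{targ}(s^{*k})}$ neither exceeds $v_{max}$ nor enters any obstacle during $s^{*k}$'s window then permits the agent to ride $\tau_{\text{targ}(s^{*k})}$ from time $T_k$ to time $t^*_k$, and from $t^*_k$ onward follow $\tau_A^*$ exactly until it intercepts $s^{*k+1}$ at time $t^*_{k+1}$. This produces a feasible trajectory from $(\tau_A^k(T_k), T_k)$ to $s^{*k+1}$ with arrival time $t^*_{k+1}$. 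Theorem \ref{theorem:optimally_solve_subproblem} then guarantees that Alg. \ref{alg:point_to_moving_target_planning_alg} returns a trajectory with arrival time $T_{k+1} \leq t^*_{k+1}$, closing the induction. The terminal case $k = N_\tau$ makes the depot a valid successor by the same argument, and the DFS returns a feasible MT-TSP-O solution at Line \ref{algline:return_mt_tsp_o_soln}.

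The main obstacle is fact (b): although the MTVG-TSP-chosen trajectory $\tau_A^k$ is generally different from $\tau_A^*$ prior to time $T_k$, both must coincide at the rendezvous with target $\text{targ}(s^{*k})$ at time $T_k$. It is exactly this alignment, combined with the standing assumption on target behavior inside its own time window, that permits the ``stay on the current target, then rejoin $\tau_A^*$'' construction of a feasible witness trajectory. Everything else --- finiteness of the trajectory tree, the popping order of DFS, and the individual successor-filter checks --- is routine once this witness is in hand.
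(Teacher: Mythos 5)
Your proof is correct and follows essentially the same route as the paper's: an induction along the reference solution's window-node sequence whose inductive step combines Theorem \ref{theorem:optimally_solve_subproblem} with the ``ride the just-intercepted target, then rejoin $\tau_A^*$'' construction permitted by the assumptions that targets stay slower than $v_{max}$ and out of obstacle interiors during their windows. The differences are bookkeeping rather than substance --- you fix the reference window-node sequence and carry the time-domination invariant $T_k \le t^*_k$, whereas the paper re-bases the reference solution at each step and maintains that a prefix of \emph{some} feasible solution is always on the stack --- and you are in fact more explicit than the paper in verifying that $s^{*k+1}$ survives the successor filter (via the $LFDT$ witness) and that the Lookahead check cannot reject it.
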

Proving Theorem \ref{theorem:completeness_thm_feas} requires showing that Alg. \ref{alg:dfs} terminates, and that it terminates by returning a feasible MT-TSP-O solution, as opposed to terminating on Line \ref{algline:term_infeas} due to an empty stack. Termination is guaranteed because Alg. \ref{alg:dfs}'s search tree is finite. We show that termination is not caused by an empty stack via an induction argument proving that there is always a prefix of a feasible MT-TSP-O solution on the stack. We detail the induction step here.

Suppose a tuple $(S, \tau_A, T)$ is popped from the stack with $(\tau_A, T)$ a prefix of some feasible MT-TSP-O solution $(\tau_A^*, T^*)$. Let $p = \tau_A(T)$. Of the window-nodes intercepted by $\tau_A^*$ but not $\tau_A$, let $s'$ be the window-node intercepted earliest. Since $\tau_A^*$ travels feasibly from $(p, T)$ to $s'$, arriving at some time ${T^*}'$, Theorem \ref{theorem:optimally_solve_subproblem} implies that Alg. \ref{alg:point_to_moving_target_planning_alg} generates a trajectory $\bar{\tau}_A'$ from $(p, T)$ to $s'$, arriving with $T' \leq {T^*}'$. Under the assumptions that the targets move no faster than the agent's maximum speed and do not enter the interior of obstacles during their time windows, it is feasible for an agent trajectory to follow $\tau_{\text{targ}(s')}$ from $T'$ to ${T^*}'$. Therefore we can construct a feasible MT-TSP-O solution $(\tau_A^{**}, T^*)$ by having the agent follow $\tau_A$ until time $T$, follow $\bar{\tau}_A'$ until $T'$, follow $\tau_{\text{targ}(s')}$ from $T'$ to ${T^*}'$, and follow $\tau_A^*$ from ${T^*}'$ to $T^*$. We push $(S', \tau_A', T')$ onto the stack, where $\tau_A'$ is the concatenation of $\tau_A$ with $\bar{\tau}_A'$, and $(\tau_A', T')$ is a prefix of $(\tau_A^{**}, T^*)$.

\begin{theorem}\label{theorem:completeness_thm_infeas}
If an MT-TSP-O instance is infeasible, MTVG-TSP terminates and reports infeasible in finite time.
\end{theorem}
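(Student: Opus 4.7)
The plan is to prove termination and then to observe that, under infeasibility, termination can only occur by emptying the stack and returning on Line \ref{algline:term_infeas}. Since an infeasible instance admits no feasible solution, Algorithm \ref{alg:dfs} cannot reach Line \ref{algline:return_mt_tsp_o_soln}, which would produce a feasible MT-TSP-O trajectory ending at the depot. Thus the whole argument reduces to showing that Algorithm \ref{alg:dfs} runs in finite time regardless of the status of the instance.

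First I would bound the size of the DFS tree built by Algorithm \ref{alg:dfs}. By Condition 2 in the definition of a successor window-node (Section \ref{subsec:feas_trj_gen}), every non-depot window-node in a stored sequence $S$ corresponds to a distinct target. Combined with Condition 3, which only allows the depot to be appended once the other $N_\tau$ targets have been visited, this caps $|S|$ at $N_\tau + 2$. The branching factor at each tree-node is bounded by $|V_{tw}|$, so the number of tree-nodes ever pushed onto the stack is at most $|V_{tw}|^{N_\tau + 2}$, which is finite because $V_{tw}$ is finite.

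Next I would verify that each iteration of the outer loop on Line \ref{algline:check_stack_empty} takes finite time. The potentially unbounded step is the A* call to PointToMovingTargetSearch on Line \ref{algline:pt_moving_target_result}. The MTVG has $|V_{vis}| + 2$ nodes, a finite quantity. The guard $v' \notin$ CLOSED on Line \ref{algline:check_before_add_to_open} ensures that each node leaves OPEN at most once, even though the path-dependent edge costs into $s$ could otherwise trigger re-insertion. Therefore the while loop in Algorithm \ref{alg:point_to_moving_target_planning_alg} runs at most $|\tilde{V}_{vis}|$ times. Each iteration inspects finitely many successors, and computing $\tilde{c}_{vis}(v, v', T + g(v))$ is a closed-form Euclidean calculation between position-nodes, while for an edge terminating at $s$ it is the minimum of $SFT$ over the finite set $\text{vis}(v, s)$; finiteness of $\text{vis}(v, s)$ follows from the polygonal $\text{vpoly}(v)$ intersecting the straight-line target trajectory during the time window associated with $s$ in finitely many sub-intervals. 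Similarly, GetSuccessorWindowNodes, Lookahead, ConcatenateTrajectories, and the sort on Line \ref{algline:sort_by_T} all operate on finite sets in finite time.

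Combining the finite tree size with finite per-iteration cost, Algorithm \ref{alg:dfs}'s outer loop terminates. On an infeasible instance the only exit path avoiding Line \ref{algline:return_mt_tsp_o_soln} is exhausting the stack and reaching Line \ref{algline:term_infeas}, so the algorithm reports infeasible in finite time. The main obstacle I anticipate is justifying the single-expansion-per-node property of the A* in Algorithm \ref{alg:point_to_moving_target_planning_alg} despite its path-dependent edge costs; once the $v' \notin$ CLOSED guard is invoked, however, the remainder is routine bookkeeping over the sizes of $V_{tw}$, $V_{vis}$, and the visible interval sets in $\Lambda_{vis}$.
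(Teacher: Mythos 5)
Your overall skeleton is the same as the paper's: Alg.~\ref{alg:dfs} terminates because its search tree is finite, and since infeasibility rules out returning on Line~\ref{algline:return_mt_tsp_o_soln}, it must exit on Line~\ref{algline:term_infeas}. Where you genuinely differ is in how you justify termination of the lower-level search, Alg.~\ref{alg:point_to_moving_target_planning_alg}, on queries with no solution. The paper does not argue from finiteness of the MTVG plus the CLOSED list; it uses the third condition on Line~\ref{algline:check_before_add_to_open}, $g_{cand}(v') \leq t^f(s) - T$, to bound the explored region to at most $M = (t^f(s)-T)/\delta$ steps from $p$ (with $\delta$ a lower bound on edge costs), and then feeds this bound into the termination argument of Theorem 1, Case 2 of \cite{hart1968AFormalBasis}. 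Your route (finite node set, closed nodes never re-inserted) is more elementary and essentially sound, but one claim is imprecise: a node can be inserted into OPEN several times before it is closed (each time its $g$-value improves), so ``the while loop runs at most $|\tilde{V}_{vis}|$ times'' is not literally correct. You need the slightly longer observation that the number of insertions, and hence of pops, is still finite (position-node edge costs are fixed and nonnegative, and $s$ has no outgoing edges), or you can simply invoke the guard on Line~\ref{algline:check_before_add_to_open} as the paper does.

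There is also one omission relative to the theorem as stated. The theorem concerns MTVG-TSP as a whole, which before running Alg.~\ref{alg:dfs} constructs $G_{tw}$ by solving an LFDT problem (via Alg.~\ref{alg:point_to_moving_target_planning_alg} on the time-reversed Problem~\ref{optprob:lfdt_problem_transform}) for every ordered pair of window-nodes; this preprocessing is exactly where infeasible A* queries arise, since inside the DFS Condition~\ref{enumitem:succ_tw_cond1} guarantees that every call on Line~\ref{algline:pt_moving_target_result} is feasible. Your proof only discusses the A* calls made inside the DFS loop. Your termination argument for Alg.~\ref{alg:point_to_moving_target_planning_alg} transfers verbatim to the LFDT computations, but a complete proof should say so explicitly; otherwise the preprocessing stage of MTVG-TSP (the part the paper's proof is mainly concerned with) is left uncovered.
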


Termination of Alg. \ref{alg:dfs} follows from its search tree being finite, so the remaining claim to prove is that Alg. \ref{alg:point_to_moving_target_planning_alg} terminates when performing LFDT computations before executing Alg. \ref{alg:dfs}. To do so, we use the condition on Line \ref{algline:check_before_add_to_open} $g_{cand}(v') \leq t^f(s) - T$ to prevent the search from generating nodes that are reached after the end time of $s$, which bounds the number of steps through $\tilde{G}_{vis}$ away from $p$ the search will ever explore. We then apply this bound within the termination proof of A* \cite{hart1968AFormalBasis}.

\begin{comment}
AMD Ryzen 7 5800H 1.2 GHz CPU with 14 GB RAM
\end{comment}

\begin{comment}
\begin{figure*}
    \centering
    \vspace{-0.75cm}
    \includegraphics[width=0.8\textwidth]{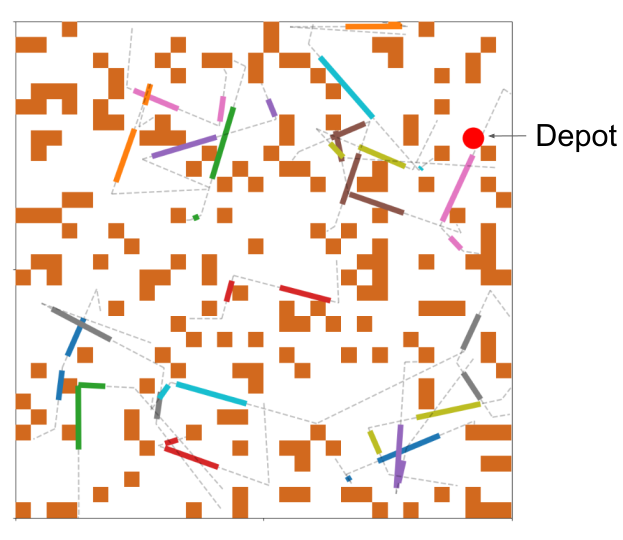}
    \vspace{-0.25cm}
    \caption{Example 20-target instance. Each target's time window lengths sum to 26 s.}
    \label{fig:example_instance}
    \vspace{-0.5cm}
\end{figure*}
\end{comment}
\begin{wrapfigure}[13]{r}{0.5\textwidth}
\centering
\vspace{-2.25cm}
\includegraphics[width=0.5\textwidth]{}
\vspace{-0.5cm}
\caption{Example 20-target instance. Each target's time window lengths sum to 26 s.}
\label{fig:example_instance}
\vspace{-0.5cm}
\end{wrapfigure}

\section{Experiments}
We ran all experiments on an Intel i9-9820X 3.3GHz CPU with 128 GB RAM. As a baseline, we implemented a sampled-points based method detailed in Section \ref{subsec:baseline}, based on prior work that solve special cases of the MT-TSP-O \cite{philip2023CStar, stieber2022DealingWithTime, Li2019RendezvousPlanning}. We test MTVG-TSP and the baseline on 570 problem instances, where an instance consists of a depot location, trajectories and time windows of targets, and an obstacle grid. An example instance is shown in Fig. \ref{fig:example_instance}, and we show a solution to an example instance in the video in the supplementary material. For each method, we measured the computation time for the method to obtain its first feasible solution for each instance, setting an upper limit of 300 s.

\subsection{Baseline}\label{subsec:baseline}
For our baseline, based on \cite{Li2019RendezvousPlanning, stieber2022DealingWithTime, philip2023CStar}, we sample the trajectory of each target within its time windows into points in space-time, such that if we concatenated all of a target's time windows, the points would be spaced uniformly in time. Next, we plan the shortest obstacle-free path in space between each pair of points using \cite{cui2017compromise}, then convert the path into a trajectory where the agent moves at max speed along the path, then waits with zero velocity at the final position until the final time. If the agent cannot reach the final position by the final time, travel between those two points is infeasible. After computing these trajectories, we pose a generalized traveling salesman problem (GTSP) to find a sequence of points to visit. We then formulate the GTSP as an integer linear program (ILP) as in \cite{laporte1987generalized}, but without subtour elimination constraints. Subtours are only possible if trajectories of two targets intersect exactly in space and time, and we ensure this does not occur. We solve the ILP using Gurobi, obtaining a sequence of points, then concatenate the trajectories between every consecutive pair of points in the sequence to obtain a MT-TSP-O solution.

Since we are not guaranteed to get a feasible solution for a fixed number of sample points, we initialize the algorithm with 10 points per target. If the ILP is infeasible, we increase the number of points by 10 and attempt to solve the ILP again. We repeat this process until the ILP is feasible, then take the first feasible solution Gurobi produces. The computation time reported for an instance is the sum of computation times for all attempted numbers of sample points.

\subsection{Generating Problem Instances}
We generated two sets of instances, corresponding to Experiment 1 (Section \ref{subsec:vary_tw_len}) and Experiment 2 (Section \ref{subsec:vary_num_tw}). In Experiment 1, we varied the number of targets from 10 to 30 in increments of 10 and varied the sum of each target's time window lengths from 2 s to 50 s in increments of 4, keeping the number of time windows per target fixed to 2. In Experiment 2, we varied the number of targets as in Experiment 1 and the number of time windows per target from 1 to 6, keeping the sum of lengths fixed. We generated 10 instances for each choice of experiment parameters. When varying the sum of a target's time window lengths, we randomly generated the instances with the longest windows first, then randomly shortened the time windows to generate more instances. When varying the number of windows per target, we first generated instances with 1 window with length equal to 22 s, then randomly split this single window into multiple windows. We generated the instances prior to shortening and splitting time windows as follows to ensure all instances were feasible. First, we randomly sampled an occupancy grid with 20\% of cells occupied. Then we initialized the agent at a random depot location $p_d$ in free space. Finally we repeated the following steps $N_\tau$ times, starting with $i = 1$, $p^0 = p_{d}$, and $t^0 = 0$:
\begin{enumerate}
    \item Sample an interception position $p^i$ in free space for target $i$.
    \item Plan a path in space from $p^{i - 1}$ to $p_i$ using \cite{cui2017compromise}.
    \item Move the agent at a speed $\beta v_{max}$ along the path until the end of the path at $p^i$, obtaining an arrival time $t^i = t^{i - 1} + d^i/(\beta v_{max})$, where $d^i$ is the distance traveled along the path. Here, we use $\beta = 0.99$ to make the instances challenging without making them borderline infeasible.
    \item Sample a piecewise linear trajectory for target $i$ such that the number of linear segments equals the specified number of time windows, the trajectory duration is greater than the specified sum of window lengths, and the trajectory arrives at position $p^i$ at time $t^i$. For each segment of the trajectory, we sample the velocity direction uniformly at random, then select the speed uniformly at random from the range $[\frac{v_{max}}{8}, \frac{v_{max}}{4}]$. This range is from \cite{philip2024mixedinteger}, which studies the MT-TSP. After creating this trajectory, we randomly sample a subset of each segment's time interval to create the time windows.
\end{enumerate}

\subsection{Experiment 1: Varying Sum of Time Window Lengths}\label{subsec:vary_tw_len}
In this experiment, we varied the number of targets and the sum of each target's time window lengths. The results in Fig. \ref{fig:vary_tw_len} show that as we increase the number of targets, we see wider and wider ranges for the sum of window lengths where MTVG-TSP outperforms the sampled-points method in median and maximum computation time. In these critical ranges, the sampled-points method's computation time is large because it needs a large number of points to find a feasible solution, as shown in Fig. \ref{fig:points_needed_vary_tw}, leading to a large underlying integer program. In Appendix \ref{sec:usable_fraction} in the supplementary material, we show that these peaks occur when there are large intervals in some target's time windows where no feasible MT-TSP-O solution intercepts the target. In these cases, it is difficult to sample a point in one of the \textit{usable intervals} where some feasible solution does intercept the target, since the combined length of these usable intervals is small relative to the combined length of the time windows.

MTVG-TSP's median computation time varies less significantly than the sampled-points method's, though MTVG-TSP's maximum computation time peaks in the same regions as the sampled-points method's. For example, in Fig. \ref{fig:vary_tw_len} (b), consider the 20-target instance with the largest peak in MTVG-TSP's runtime, occuring when the sum of time window lengths equals 14 s. In this instance, we found that if we decreased the sum of lengths to 10 s, MTVG-TSP found a solution intercepting the same sequence of time windows as it did for the 14 s instance, but with 65\% less computation time. The reason for this phenomenon is that in the 14 s instance, there are several paths through $G_{tw}$ that do not exist in the 10 s instance, adding branches to Alg. \ref{alg:dfs}'s search tree. Profiling showed that in the 14 s instance, Alg. \ref{alg:dfs} spent 60\% of its time exploring these additional branches. The fact that it returned the same sequence of time windows as in the 10 s instance indicates that the additional paths through $G_{tw}$ are useless: they cannot be part of a cycle corresponding to a feasible MT-TSP-O solution, only adding to computation time. On the other hand, we found that increasing the sum of time window lengths from 14 s to 18 s caused MTVG-TSP to find a different time window sequence than in the 14 s instance, and twice as quickly. In this case, increasing time window lengths added useful paths to $G_{tw}$.

\begin{figure*}
    \centering
    \vspace{-0.1cm}
    \includegraphics[width=1\textwidth]{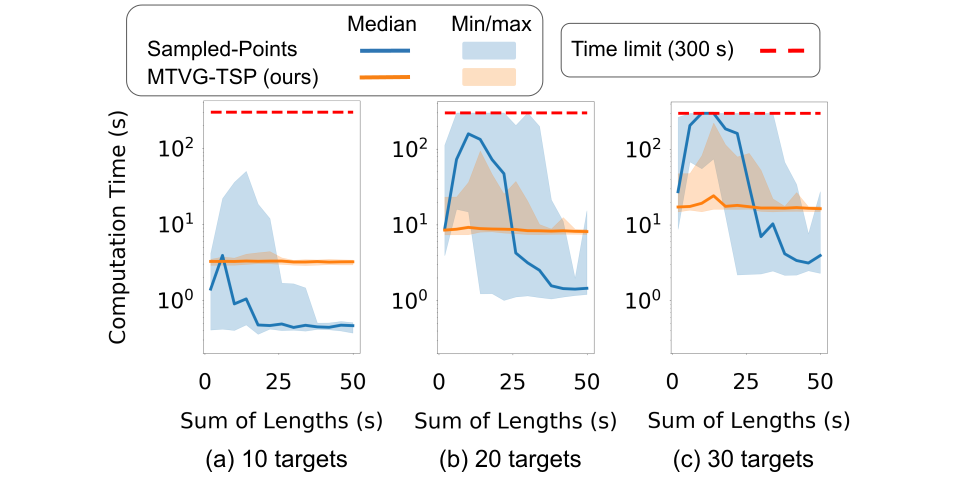}
    \vspace{-0.6cm}
    \caption{Time for each algorithm to compute a feasible solution, varying the sum of time window lengths per target while fixing the number of time windows to two. The sampled-points method reached the time limit in 11\% of instances without finding a feasible solution. In these cases,  the computation time is set equal to the time limit. MTVG-TSP did not reach the time limit in any instance.}
    \label{fig:vary_tw_len}
    \vspace{-0.1cm}
\end{figure*}

\begin{figure}
    \centering
    \includegraphics[width=1\textwidth]{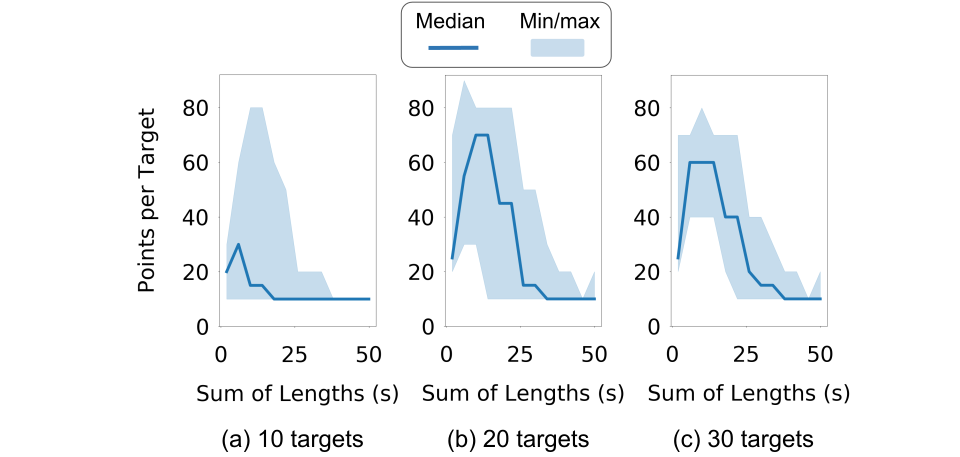}
    \vspace{-0.6cm}
    \caption{Maximum attempted number of points per target used by sampled-points method. In instances where the method found a feasible solution, we report the number of points used to obtain the solution. In instances where the method timed out, we report the number of points upon timeout.}
    \label{fig:points_needed_vary_tw}
    %\vspace{-0.5cm}
    \vspace{-0.2cm}
\end{figure}

\renewcommand{\arraystretch}{1.8}
\begin{table}%[hbt!]
\vspace{-0.2cm}
\caption{Comparison of the solution costs $T_{MTVG}$ from our method against the costs $T_{PT}$ from the sampled-points method. Appendix \ref{sec:additional_plots} provides plots of these costs.}\label{table:cost_comparison}
\centering
\begin{tabular}{|@{\quad}l@{\quad}|@{\quad}l@{\quad}|@{\quad}l@{\quad}|@{\quad}l@{\quad}|}
\hline
 & Median & Min & Max \\
\hline
$\frac{T_{MTVG} - T_{PT}}{T_{PT}}*100\%$ & \textcolor{DarkGreen}{\textbf{-0.46\%}} & \textcolor{DarkGreen}{\textbf{-28\%}} & \textcolor{orange}{\textbf{34\%}}\\
\hline
\end{tabular}
\vspace{-1cm}
\end{table}

\begin{comment}
In Table \ref{table:cost_comparison}, we compare the methods' solution costs in instances where the sampled-points method found a solution\footnote{MTVG-TSP found a solution in all instances.}. The median percent difference in cost between the methods is small, but the range of differences is large, since we take the first feasible solution from each method without regards to its optimality. 
\end{comment}

In Table \ref{table:cost_comparison}, we compare the costs between the methods in instances where the sampled-points method found a solution\footnote{MTVG-TSP found a solution in all instances.}. The median percent difference in cost between the methods is small, indicating that both methods often provide similar solution quality. The range of percent differences is large, since we take the first feasible solution from each method as opposed to solving to optimality. %For example, consider the instance where MTVG-TSP's cost is 33\% smaller than the sampled-points method, corresponding to the minimum percent difference in Table \ref{table:cost_comparison}. If we allow the sampled-points method to solve its underlying ILP to optimality, MTVG-TSP's cost is only 5\% smaller. Our method's cost is still slightly better because the sampled-points method only used 10 points per target in this instance. If we make it use 20 points, the difference between costs becomes essentially zero.

\subsection{Experiment 2: Varying Number of Time Windows per Target}\label{subsec:vary_num_tw}
In this experiment, we varied the number of targets and number of time windows. Since the sampled-points method does not depend on the number of time windows, only the total set of times covered by the time windows (which we are not varying), we run the sampled-points method for the instances with one window, then show the same runtimes for instances with multiple time windows. As shown in Fig. \ref{fig:vary_num_tw}, when we decrease the number of time windows, MTVG-TSP outperforms the sampled-points method in median and max computation time. Both methods' computation times increase with the number of targets. To explain the trends in MTVG-TSP's computation time, we divide its computation time between its three major components: initial visibility computations (Section \ref{sec:init_visibility}), time window graph construction (Section \ref{sec:twg}), and trajectory tree construction (Section \ref{subsec:feas_trj_gen}). Fig. \ref{fig:mtvg_tsp_time_breakdown} shows that computation time for all components increases with the number of targets and number of time windows per target. Increasing either of these quantities increases the total number of time windows, leading to more window-nodes in the time window graph. This requires adding more nodes into the initial visibility graph corresponding to endpoints of targets' trajectories within their time windows, and computing more visible interval sets. Both operations increase initial visibility computation time. More window-nodes also leads to more pairwise LFDT computations when constructing the time window graph. Finally, a larger time window graph leads to a wider and deeper trajectory tree, leading to a larger number of MTVG constructions and searches during trajectory tree construction.
\begin{figure*}
    \centering
    \vspace{-0.3cm}
    \includegraphics[width=1\textwidth]{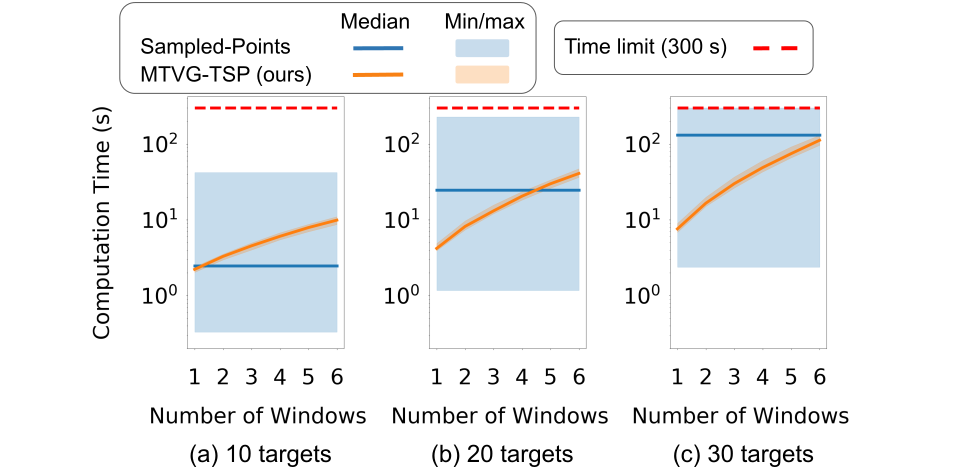}
    \vspace{-0.6cm}
    \caption{Time for each algorithm to compute a feasible solution, varying the number of time windows.}
    \label{fig:vary_num_tw}
    \vspace{-0.1cm}
\end{figure*}
\begin{figure*}
    \centering
    \includegraphics[width=\textwidth]{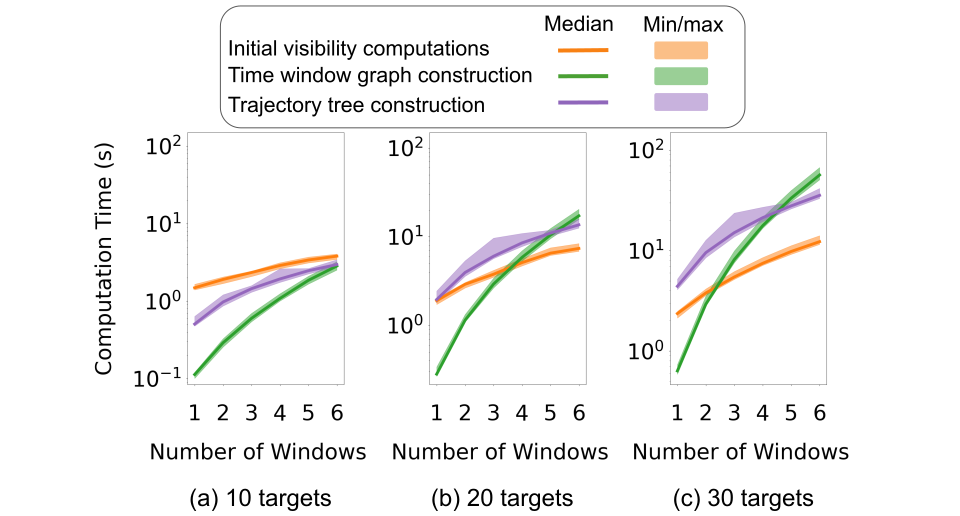}
    \vspace{-0.6cm}
    \caption{Breaking down computation time for MTVG-TSP.}
    \label{fig:mtvg_tsp_time_breakdown}
\end{figure*}

We similarly break down the timing for the sampled-points method in this experiment in Appendix \ref{sec:additional_plots}, Fig. \ref{fig:point_discr_timing_breakdown}, dividing time between GTSP graph construction (finding trajectories between all pairs of points) and GTSP solve time. GTSP solve time exceeds graph construction time when we have 20 or more targets.

\section{Conclusion}
In this paper, we presented MTVG-TSP, a complete algorithm for the moving target traveling salesman problem with obstacles, leveraging a novel graph called a moving target visibility graph (MTVG). We showed that for a range of time window lengths, our algorithm takes less median and maximum time to find feasible solutions than prior methods. Future directions for this work are to incorporate kinodynamic constraints on the agent and involve multiple agents.

\begin{credits}
\subsubsection{\ackname} This material is partially based upon work supported by the National Science Foundation under Grant No. 2120219 and 2120529. Any opinions, findings, and conclusions or recommendations expressed in this material are those of the author(s) and do not necessarily reflect the views of the National Science Foundation.

\subsubsection{\discintname}
The authors have no competing interests to declare that are
relevant to the content of this article.
\end{credits}

\clearpage
%
% ---- Bibliography ----
%
% BibTeX users should specify bibliography style 'splncs04'.
% References will then be sorted and formatted in the correct style.
%
\bibliographystyle{splncs04}
% \bibliography{mybibliography}
%
\bibliography{refs}

\begin{thebibliography}{10}
\providecommand{\url}[1]{\texttt{#1}}
\providecommand{\urlprefix}{URL }
\providecommand{\doi}[1]{https://doi.org/#1}

\bibitem{bourjolly2006orbit}
Bourjolly, J.M., Gurtuna, O., Lyngvi, A.: On-orbit servicing: a time-dependent,
  moving-target traveling salesman problem. International Transactions in
  Operational Research  \textbf{13}(5),  461--481 (2006)

\bibitem{choset2005principles}
Choset, H., Lynch, K.M., Hutchinson, S., Kantor, G.A., Burgard, W.: Principles
  of robot motion: theory, algorithms, and implementations. MIT press (2005)

\bibitem{choubey2013moving}
Choubey, N.S.: Moving target travelling salesman problem using genetic
  algorithm. International Journal of Computer Applications  \textbf{70}(2)
  (2013)

\bibitem{cui2017compromise}
Cui, M., Harabor, D.D., Grastien, A.: Compromise-free pathfinding on a
  navigation mesh. In: IJCAI. pp. 496--502 (2017)

\bibitem{deBerg2000computational}
De~Berg, M.: Computational geometry: algorithms and applications. Springer
  Science \& Business Media (2000)

\bibitem{Dumas1995OptimalAlgorithm}
Dumas, Y., Desrosiers, J., Gelinas, E., Solomon, M.M.: An optimal algorithm for
  the traveling salesman problem with time windows. Operations research
  \textbf{43}(2),  367--371 (1995)

\bibitem{6760409}
Englot, B., Sahai, T., Cohen, I.: Efficient tracking and pursuit of moving
  targets by heuristic solution of the traveling salesman problem. In: 52nd
  IEEE Conference on Decision and Control. pp. 3433--3438 (2013)

\bibitem{groba2015solving}
Groba, C., Sartal, A., V{\'a}zquez, X.H.: Solving the dynamic traveling
  salesman problem using a genetic algorithm with trajectory prediction: An
  application to fish aggregating devices. Computers \& Operations Research
  \textbf{56},  22--32 (2015)

\bibitem{hart1968AFormalBasis}
Hart, P.E., Nilsson, N.J., Raphael, B.: A formal basis for the heuristic
  determination of minimum cost paths. IEEE Transactions on Systems Science and
  Cybernetics  \textbf{4}(2),  100--107 (1968)

\bibitem{helvig2003moving}
Helvig, C.S., Robins, G., Zelikovsky, A.: The moving-target traveling salesman
  problem. Journal of Algorithms  \textbf{49}(1),  153--174 (2003)

\bibitem{jiang2005tracking}
Jiang, Q., Sarker, R., Abbass, H.: Tracking moving targets and the
  non-stationary traveling salesman problem. Complexity International
  \textbf{11}(2005),  171--179 (2005)

\bibitem{laporte1987generalized}
Laporte, G., Mercure, H., Nobert, Y.: Generalized travelling salesman problem
  through n sets of nodes: the asymmetrical case. Discrete Applied Mathematics
  \textbf{18}(2),  185--197 (1987)

\bibitem{lavalle2006planning}
LaValle, S.M.: Planning algorithms. Cambridge university press (2006)

\bibitem{Li2019RendezvousPlanning}
Li, B., Page, B.R., Hoffman, J., Moridian, B., Mahmoudian, N.: Rendezvous
  planning for multiple auvs with mobile charging stations in dynamic currents.
  IEEE Robotics and Automation Letters  \textbf{4}(2),  1653--1660 (2019)

\bibitem{liu1992path}
Liu, Y.H., Arimoto, S.: Path planning using a tangent graph for mobile robots
  among polygonal and curved obstacles: Communication. The International
  Journal of Robotics Research  \textbf{11}(4),  376--382 (1992)

\bibitem{lozano1979algorithm}
Lozano-P{\'e}rez, T., Wesley, M.A.: An algorithm for planning collision-free
  paths among polyhedral obstacles. Communications of the ACM  \textbf{22}(10),
   560--570 (1979)

\bibitem{marlow2007travelling}
Marlow, D., Kilby, P., Mercer, G.: The travelling salesman problem in maritime
  surveillance--techniques, algorithms and analysis. In: Proceedings of the
  international congress on modelling and simulation. pp. 684--690. Citeseer
  (2007)

\bibitem{de2019experimental}
de~Moraes, R.S., de~Freitas, E.P.: Experimental analysis of heuristic solutions
  for the moving target traveling salesman problem applied to a moving targets
  monitoring system. Expert Systems with Applications  \textbf{136},  392--409
  (2019)

\bibitem{philip2023CStar}
Philip, A.G., Ren, Z., Rathinam, S., Choset, H.: C*: A new bounding approach
  for the moving-target traveling salesman problem. arXiv preprint
  arXiv:2312.05499  (2023)

\bibitem{philip2024mixedinteger}
Philip, A.G., Ren, Z., Rathinam, S., Choset, H.: A mixed-integer conic program
  for the moving-target traveling salesman problem based on a graph of convex
  sets. arXiv preprint arXiv:2403.04917  (2024)

\bibitem{savelsbergh1985local}
Savelsbergh, M.W.: Local search in routing problems with time windows. Annals
  of Operations research  \textbf{4},  285--305 (1985)

\bibitem{stieber2022multiple}
Stieber, A.: The multiple traveling salesperson problem with moving targets.
  Ph.D. thesis, BTU Cottbus-Senftenberg (2022)

\bibitem{stieber2022DealingWithTime}
Stieber, A., F{\"u}genschuh, A.: Dealing with time in the multiple traveling
  salespersons problem with moving targets. Central European Journal of
  Operations Research  \textbf{30}(3),  991--1017 (2022)

\bibitem{cgal:eb-24a}
{The CGAL Project}: {CGAL} User and Reference Manual. {CGAL Editorial Board},
  {5.6.1} edn. (2024), \url{https://doc.cgal.org/5.6.1/Manual/packages.html}

\bibitem{ucar2019meta}
Ucar, U., I{\c{s}}leyen, S.K.: A meta-heuristic solution approach for the
  destruction of moving targets through air operations. International Journal
  of Industrial Engineering  \textbf{26}(6) (2019)

\bibitem{wang2023moving}
Wang, Y., Wang, N.: Moving-target travelling salesman problem for a helicopter
  patrolling suspicious boats in antipiracy escort operations. Expert Systems
  with Applications  \textbf{213},  118986 (2023)

\end{thebibliography}

\clearpage

\begin{subappendices}
\renewcommand{\thesection}{\Alph{section}}%

\section{Characterizing Instance Difficulty Using the Minimum Usable Fraction}\label{sec:usable_fraction}
We define the \textit{usable time set} of target $i$ as the set $\bar{w}_{i} \subseteq \bigcup\limits_{j \in [N_i]} w_{i,j}$ such that for all $t \in \bar{w}_i$, some feasible solution to the MT-TSP-O intercepts target $i$ at time $t$. We can partition the usable time set uniquely into a set of disjoint subintervals $\bar{W}_i = \{\bar{w}_{i, 1}, \bar{w}_{i, 2}, \dots, \bar{w}_{i, |\bar{W}_i|}\}$ of target $i$'s time windows, where each $\bar{w}_{i, j}$ is called a \textit{maximal usable interval}. We define the \textit{usable fraction} for target $i$ as the sum of lengths of target $i$'s maximal usable intervals, divided by the sum of lengths of target $i$'s time windows. For a given instance, we call the \textit{minimum usable fraction} the minimum over all targets of their usable fractions. Fig. \ref{fig:usable_fraction} shows that the minimum usable fraction is a measure of difficulty of an instance. As the minimum usable fraction decreases, a method that represents trajectories of targets using sample points, as described in Section \ref{subsec:baseline}, tends to need a larger number of points to find a feasible solution.

To compute the usable fraction, we must enumerate each target's maximal usable intervals. Each maximal usable interval $\bar{w}_{i, j}$ for target $i$ can be represented uniquely as a union of several possibly overlapping \textit{sequence-specific usable intervals} $\bar{w}_{i, j, k}$, where each $k$ corresponds to a unique set of feasible MT-TSP-O solutions that intercepts a particular sequence of window-nodes. If we enumerate all sequences of window-nodes corresponding to a feasible MT-TSP-O solution, find the sequence-specific usable interval for each target, for each sequence, then merge any of a target's overlapping sequence-specific usable intervals, we have the maximal usable intervals for each target. Then we can compute the usable fraction.

Now we show how to compute the sequence-specific usable intervals for each target, given a sequence of window-nodes $S = (s^1, s^2, \dots, s^{N_\tau})$. First, we apply Alg. \ref{alg:point_to_moving_target_planning_alg} recursively to determine for each $s^i$ the earliest possible time $t^{E, i}$ that an agent trajectory could intercept $s^i$ while respecting the order of $S$. In particular, we start by running Alg. \ref{alg:point_to_moving_target_planning_alg} from point $(p_D, 0)$ to $s_1$, obtaining arrival time $t^{E, 1}$. Then for each $i$ from 1 to $N_{\tau} - 1$, we run Alg. \ref{alg:point_to_moving_target_planning_alg} from point $(\tau_{\text{targ}(s^i)}(t^{E, i}), t^{E, i})$ to $s^{i + 1}$ and obtain arrival time $t^{E, i}$. We can similarly run Alg. \ref{alg:point_to_moving_target_planning_alg} backwards in time to determine the latest possible time $t^{L, i}$ a trajectory could intercept each $s^i$ while respecting the order of $S$. We start by setting $t^{L, N_\tau} = t^f(s^{N_\tau})$. Then for each $i$ from $N_{\tau}$ to $2$, we construct a fictitious window-node $s_F = (-i, t^{L, N_\tau}, t^{L, i})$, with $\tau_{-i}(t^{L, i}) = \tau_{\text{targ}(s^i)}(t^{L, i})$ and compute $t^{L, i - 1} = LFDT(s^{i - 1}, s_F)$. $[t^{E, i}, t^{L, i}]$ is the sequence-specific usable interval for $\text{targ}(s^i)$ corresponding to the set of feasible MT-TSP-O solutions that intercepts the window-nodes in $S$ in order.

In Fig. \ref{fig:usable_fraction}, we perform the above procedure to compute all sequence-specific usable intervals, then the maximal usable intervals, then the usable fraction for each target, and finally the the minimum usable fraction for each instance, in the instances from Experiment 1 (Section \ref{subsec:vary_tw_len}) with 10 targets. In the instances with 20 or more targets, we found that enumerating all window-node sequences became intractable. We plot the minimum usable fraction against the number of points used by the sampled-points method in that instance. The number of points is large when the minimum usable fraction is small, since it means that for some target, most of its sample points will land in some \textit{unusable} interval within the target's time windows, where no feasible solution intercepts the target.

\begin{figure*}
    \centering
    \includegraphics[width=0.5\textwidth]{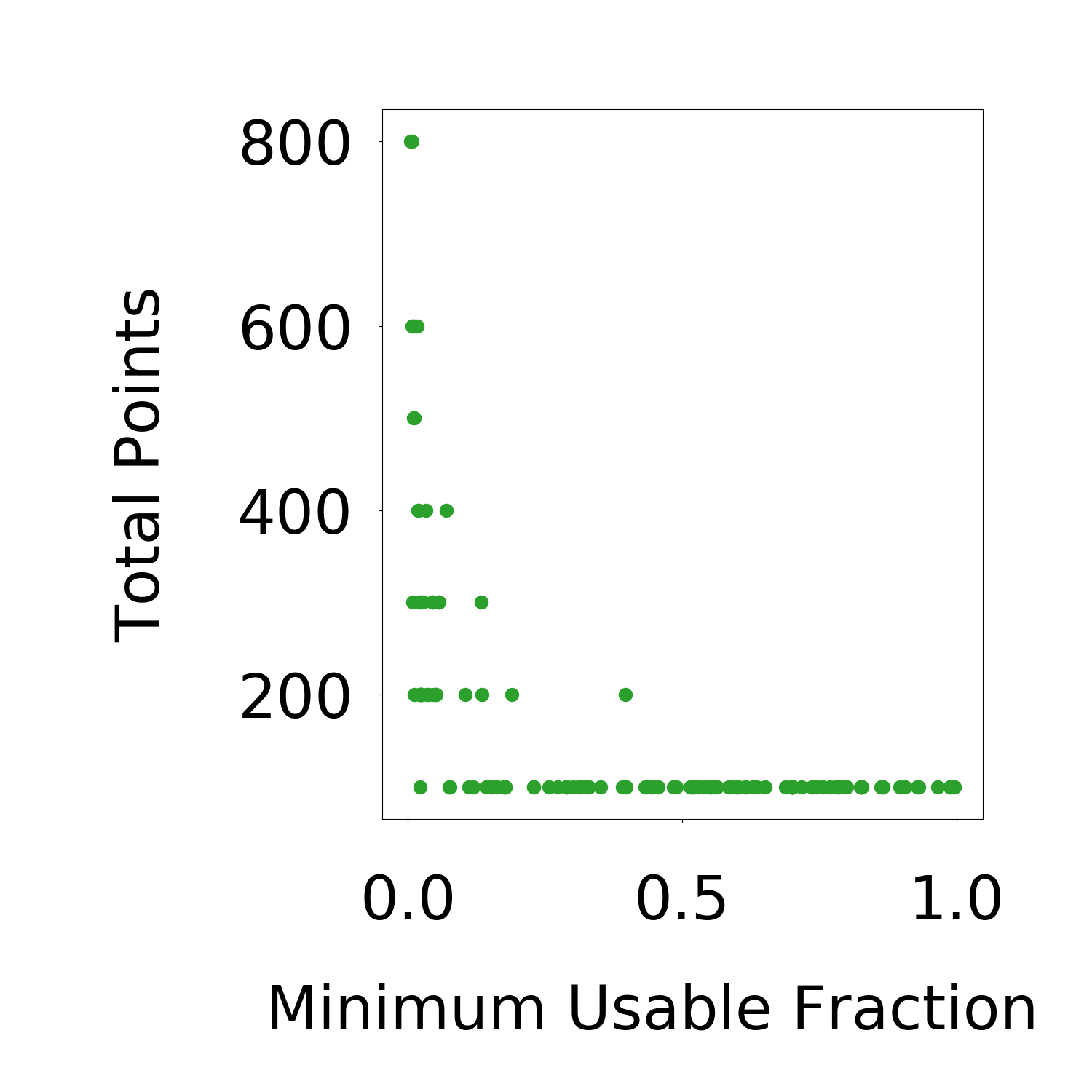}
    \caption{Minimum usable fraction vs. number of points used by sampled-points method, which is the number of targets times the number of points per target. Each point represents one of the instances with 10 targets from Experiment 1 (Section \ref{subsec:vary_tw_len}). When the minimum usable fraction is small, the sampled-points method needs many points.}
    \label{fig:usable_fraction}
\end{figure*}

\clearpage

\section{Additional Plots}\label{sec:additional_plots}
\begin{figure*}
    \centering
    \vspace{-1cm}
    \includegraphics[width=\textwidth]{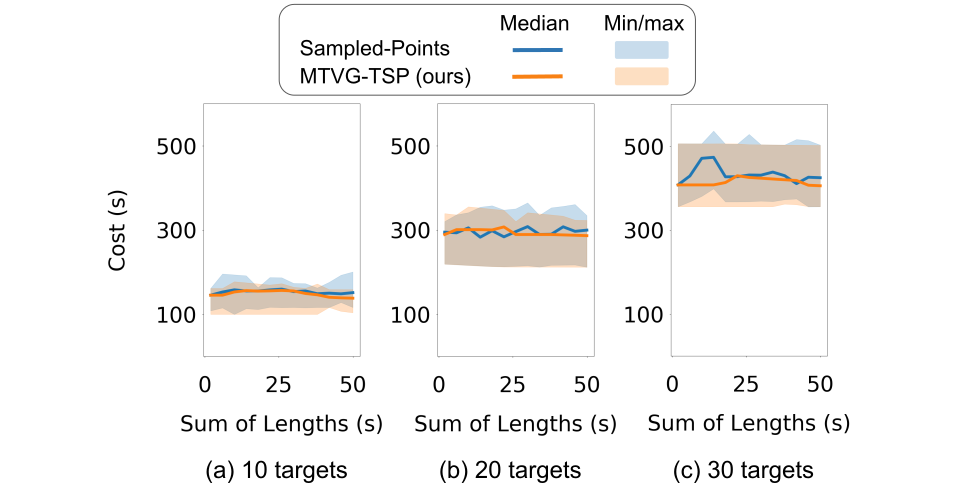}
    \vspace{-0.65cm}
    \caption{Comparing costs of feasible solutions from MTVG-TSP and the sampled-points method, in cases where the sampled-points method found a feasible solution.}
    \label{fig:comparing_costs}
\end{figure*}

\begin{figure*}
    \centering
    \vspace{-1cm}
    \includegraphics[width=\textwidth]{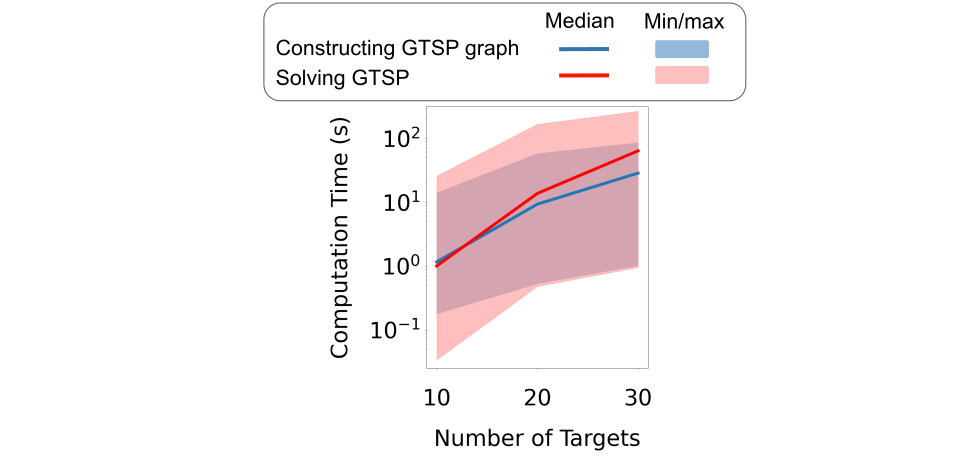}
    \vspace{-0.65cm}
    \caption{Breaking down computation time for sampled-points method. As we increase the number of targets, the median time to solve the GTSP for a sequence of points overtakes the time to construct the GTSP graph, where graph construction consists of finding trajectories between all pairs of points.}
    \label{fig:point_discr_timing_breakdown}
\end{figure*}

\clearpage

\section{Proof of Completeness}\label{sec:appendix}
\subsection{Proof of Theorem \ref{theorem:optimally_solve_subproblem}}
In this section, we provide intermediate results and finally a proof for Theorem \ref{theorem:optimally_solve_subproblem} from the main paper, appearing as Theorem \ref{theorem:feasible_and_terminate_implies_optimal} here, stating that Alg. \ref{alg:point_to_moving_target_planning_alg} finds a minimum-time trajectory from a point $(p, T) \in \mathbb{R}^2 \times \mathbb{R}$ to a goal window-node $s$. Formally, Alg. \ref{alg:point_to_moving_target_planning_alg} solves the following optimization problem:
\begin{mini!}
{\tau_A', T'}{T'}{}{\hypertarget{pt_MT}{\text{(PT-MT)}}\qquad T^* =}
\addConstraint{\tau_A(T) = p\label{eqn:point_to_mt_init_cond}}{}
\addConstraint{\tau_A \in \Psi(T, T')\label{eqn:point_to_mt_avoid_obs_and_speed_limit}}{}
\addConstraint{\tau_A(T') = \tau_{\text{targ}(s)}(T') \label{eqn:point_to_mt_intercept}}{}
\end{mini!}

Any path $Q$ through $\tilde{G}_{vis}$ from $p$ to $s$ can be converted to a feasible solution for \hyperlink{pt_mt}{(PT-MT)} using the ConstructTrajectory function in Alg. \ref{alg:point_to_moving_target_planning_alg}, described in Section \ref{subsec:feas_trj_gen}. Next, we show that we can solve \hyperlink{pt_mt}{\textnormal{(PT-MT)}} optimally by searching for an optimal path $Q$ through $\tilde{G}_{vis}$, where we define optimality with respect to a path cost function $c_{path}$. For a path $Q = (q^0, q^1, \dots, q^{N - 1})$ containing only position-nodes,
\begin{align}
    c_{path}(Q) &= \sum\limits_{i \in [N - 1]}\frac{\|q^i - q^{i - 1}\|_2}{v_{max}}.
\end{align}
For a path $Q = (q^0, q^1, \dots, q^{N - 1}, s)$ ending with window-node $s$, we have
\begin{align}
    c_{path}(Q) &= c_{path}(Q[:N]) + SFT(q^{N - 1}, T + c_{path}(Q[:N]), s)\label{eqn:path_cost_to_mt}
\end{align}
where we abused notation and defined the following:
\begin{align}
    SFT(q, t, s) = \min\limits_{I \in \text{vis}(q, s)} SFT(q, t, s, I).
\end{align}
Here, the notation $Q[:N]$ indicates the length $N$ \textit{prefix} of $Q$, with $Q[:N] = (q^0, q^1, \dots, q^{N-  1})$.

\begin{lemma}\label{lemma:feas_soln_implies_finite_cost_path_in_G_tw}
Suppose problem \hyperlink{pt_mt}{\textnormal{(PT-MT)}} is feasible. A path $Q = (p, q^1, \dots q^{N - 1}, s)$ through $\tilde{G}_{vis}$ from $p$ to $s$ exists with cost $T^* - T$, where $T^*$ is the optimal cost of \hyperlink{pt_mt}{\textnormal{(PT-MT)}}. Such a path $Q$ is an optimal path through $\tilde{G}_{vis}$.
\end{lemma}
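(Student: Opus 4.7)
The plan is to exhibit an explicit path $Q$ in $\tilde{G}_{vis}$ with $c_{path}(Q) = T^*-T$ and then conclude optimality via a matching lower bound obtained by running ConstructTrajectory in reverse. I would start by fixing an optimal solution $(\tau_A^*, T^*)$ to \hyperlink{pt_mt}{(PT-MT)} and writing $r^* := \tau_{\text{targ}(s)}(T^*)$ for the intercept position. Since $\tau_A^*$ is obstacle-free and respects the $v_{max}$ speed limit, the shortest obstacle-free spatial path $\pi$ from $p$ to $r^*$ has length $L^* \leq v_{max}(T^*-T)$. By the classical characterization of shortest paths in polygonal environments \cite{lozano1979algorithm,liu1992path}, $\pi$ is a polyline $\pi = (p, q^1, \dots, q^{N-1}, r^*)$ whose interior vertices lie in $V_O \subseteq V_{vis}$.

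The next step is to verify that $Q = (p, q^1, \dots, q^{N-1}, s)$ is a valid path in $\tilde{G}_{vis}$ and to bound its cost from above. Consecutive vertices of $\pi$ are mutually visible, so every position-to-position edge of $Q$ lies in $\tilde{E}_{vis}$ by the construction in Section~\ref{subsec:mtvg}; additionally, $q^{N-1}$ sees $r^*$, and $r^* = \tau_{\text{targ}(s)}(T^*)$ lies on $\tau_{\text{targ}(s)}$ within $s$'s time window, so $T^*$ belongs to some interval of $\text{vis}(q^{N-1}, s)$ and $(q^{N-1}, s) \in E_s$. Applying \eqref{eqn:path_cost_to_mt}, the prefix traveled at max speed reaches $q^{N-1}$ at time $t^{last} := T + (L^* - \|q^{N-1}-r^*\|_2)/v_{max}$, and the straight line from $q^{N-1}$ to $r^*$ arriving at $T^*$ requires speed $\|q^{N-1}-r^*\|_2/(T^* - t^{last})$, which is at most $v_{max}$ thanks to the global inequality $L^* \leq v_{max}(T^*-T)$. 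Therefore $SFT(q^{N-1}, t^{last}, s) \leq T^* - t^{last}$ and $c_{path}(Q) \leq T^* - T$.

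The matching lower bound and optimality within $\tilde{G}_{vis}$ come from the converse direction of the trajectory/path correspondence. Any path $Q'$ from $p$ to $s$ in $\tilde{G}_{vis}$ is converted by ConstructTrajectory into a feasible solution of \hyperlink{pt_mt}{(PT-MT)} whose total time equals $c_{path}(Q')$; the final straight segment is obstacle-free by the Section~\ref{subsec:mtvg} argument that any straight-line intercept realized within an interval of $\text{vis}(q, s)$ avoids obstacles automatically. By definition of $T^*$ this forces $c_{path}(Q') \geq T^* - T$ for every $Q'$. Taking $Q' = Q$ pins $c_{path}(Q) = T^* - T$, and applying the same inequality to arbitrary $Q'$ certifies that $Q$ attains the minimum.

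The main obstacle I expect is handling the shortest-path step cleanly rather than any of the bound manipulations. I need the interior vertices of $\pi$ to actually sit in $V_{vis}$, which is fine because $V_{vis} \supseteq V_O$ by construction, and I need the degenerate case $N=0$ of direct visibility from $p$ to $r^*$ to collapse into the same template via the edge $(p,s)$ that Section~\ref{subsec:mtvg} adds whenever $\text{vis}(p,s)\neq\emptyset$. Writing this uniformly, and being explicit that the SFT upper bound from $t^{last}$ is witnessed by a specific feasible straight-line intercept at $T^*$ even though the SFT itself may return something strictly earlier, is the piece that takes the most care.
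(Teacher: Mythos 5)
Your proposal is correct and takes essentially the same route as the paper's proof: construct $Q$ from the vertices of a shortest obstacle-free spatial path to the optimal intercept point $\tau_{\text{targ}(s)}(T^*)$, bound $c_{path}(Q)$ above by $T^*-T$ via a feasible straight-line final leg witnessing the SFT, and obtain the matching lower bound (and optimality of $Q$) from the ConstructTrajectory conversion of arbitrary paths. The only differences are cosmetic: you argue by a direct sandwich of inequalities and bound the shortest path's length by $v_{max}(T^*-T)$ instead of the paper's two contradiction arguments, which also rely on selecting the minimal-distance optimal trajectory and asserting it follows the shortest spatial path.
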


\begin{proof}
Let $(\tau_A^*, T^*)$ be one of the optimal solutions to \hyperlink{pt_mt}{(PT-MT)}, and in particular, the optimal solution with minimal distance traveled\footnote{Since \hyperlink{pt_mt}{(PT-MT)}, minimizes time, not distance, not all optimal solutions will have the same distance traveled.}. Let $p^* = \tau_{\text{targ}(s)}(T^*)$. $\tau_A^*$ necessarily follows a shortest collision-free path $\pi$ in space from $p$ to $p^*$. Path $\pi$ is necessarily polygonal, with all inner vertices being obstacle vertices \cite{deBerg2000computational}. Therefore all inner vertices of $\pi$ are in $V_O \subseteq V_{vis} \subseteq \tilde{V}_{vis}$. The source vertex of $\pi$ is $p$, and $p \in \tilde{V}_{vis}$. The destination vertex of $\pi$ is $p^*$, $p^*$ is a position on the trajectory segment corresponding to $s$, and $s \in \tilde{V}_{vis}$. The travel time of $\tau_A^*$ from $p$ to $q^{N - 1}$ is $\sum\limits_{i \in [N - 1]}\frac{\|q^{i} - q^{i - 1}\|_2}{v_{max}}$, with $q^0 = p$. Let $Q = (p, q^1, \dots, q^{N - 1}, s)$ be the path in $\tilde{G}_{vis}$ beginning with $p$, containing the inner vertices of $\pi$ in sequence, and finishing at $s$. $c_{path}({Q[:N]})$ is the same as the travel time of $\tau_A^*$ from $p$ to $q^{N - 1}$, so $T + c_{path}({Q[:N]})$ is the same as the arrival time of $\tau_A^*$ at $q^{N - 1}$.

Suppose for the sake of contradiction that the travel time of $\tau_A^*$ from $q^{N - 1}$ to $p^*$ is $T_{con}$ with $T_{con} > SFT(q^{N - 1}, T + c_{path}(Q[:N]), s)$, making its arrival time at $s$ equal to $T + c_{path}({Q[:N]}) + T_{con}$. Let $\bar{\tau}_A$ be the trajectory obtained by converting $Q$ into a feasible \hyperlink{pt_mt}{\textnormal{(PT-MT)}} solution using the ConstructTrajectory function. $\bar{\tau}_A$ arrives at $s$ at time $T + c_{path}({Q[:N]}) + SFT(q^{N - 1}, T + c_{path}(Q[:N]), s)$, which is less than the arrival time of $\tau_A^*$. This is a contradiction because the optimality of $\tau_A^*$ precludes any trajectory from arriving at $s$ earlier than $\tau_A^*$ does. Thus the travel time of $\tau_A^*$ from $q^{N - 1}$ to $p^*$ is $SFT(q^{N - 1}, T + c_{path}(Q[:N]), s)$, making its travel time from $p$ to $s$ equal $c_{path}(Q)$. Therefore $c_{path}(Q) = T^* - T$. We have now proven the first part of the lemma, since we have shown that some path through $\tilde{G}_{vis}$ exists from $p$ to $s$ with cost $T^* - T$.

Suppose for the sake of contradiction that some path $Q'$ from $p$ to $s$ through $\tilde{G}_{vis}$ exists with $c_{path}(Q') < T^* - T$. $Q'$ can be converted to a feasible solution for \hyperlink{pt_mt}{(PT-MT)} with cost $T + c_{path}(Q')$, meaning there is a feasible solution to \hyperlink{pt_mt}{(PT-MT)} with lower cost than $T + T^* - T = T^*$. This is a contradiction, because $T^*$ is the optimal cost of \hyperlink{pt_mt}{(PT-MT)}. Thus a path through $\tilde{G}_{vis}$ cannot have cost less than $T^* - T$, making any path with cost $T^* - T$ optimal. \qed
\end{proof}

We need the following Lemma in the proof for Lemma \ref{lemma:path_exists_with_optimal_prefix}.
\begin{lemma}\label{lemma:start_earlier_implies_arrive_at_same_time_worst_case}
Let $p \in \mathbb{R}^2$, and let $s$ be a window-node. Let $t^1, t^2 \in \mathbb{R}$ be times when $s$ is visible to $p$ such that $t^1 \leq t^2$. The following holds:
\begin{align}
    t^1 + SFT(p, t^1, s) \leq t^2 + SFT(p, t^2, s).\label{eqn:SFT_Q_leq_SFT_PN-1}
\end{align}
\end{lemma}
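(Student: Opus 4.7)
The plan is to prove this ``FIFO-type'' monotonicity directly: any straight-line interception plan that is feasible from $(p, t^2)$ remains feasible from $(p, t^1)$ at the same target-interception time $t_s$, because starting earlier only relaxes the speed constraint while leaving the interval-membership and visibility conditions unchanged.

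First I would handle the degenerate case $SFT(p, t^2, s) = +\infty$, where the claimed inequality is immediate. Otherwise, I would unpack the abused notation $SFT(p, t, s) = \min_{I \in \text{vis}(p, s)} SFT(p, t, s, I)$ and extract an optimizer of Problem \ref{optprob:sft_problem} from $(p, t^2)$: a visible interval $I^* \in \text{vis}(p, s)$ and an interception time $t_s^* \in I^*$ satisfying $\|\tau_{\text{targ}(s)}(t_s^*) - p\|_2 \leq v_{max}(t_s^* - t^2)$ and $SFT(p, t^2, s) = t_s^* - t^2$. In particular $t_s^* \geq t^2 \geq t^1$.

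Next I would argue that the same pair $(I^*, t_s^*)$ is feasible for Problem \ref{optprob:sft_problem} from $(p, t^1)$ on interval $I^*$. The interval membership $t_s^* \in I^*$ and the visibility $I^* \in \text{vis}(p, s)$ are unaffected by the change of departure time, and the speed constraint is only easier: since $t_s^* - t^1 \geq t_s^* - t^2 \geq 0$, we get $v_{max}(t_s^* - t^1) \geq v_{max}(t_s^* - t^2) \geq \|\tau_{\text{targ}(s)}(t_s^*) - p\|_2$. Hence $SFT(p, t^1, s, I^*) \leq t_s^* - t^1$, and taking the minimum over visible intervals yields $SFT(p, t^1, s) \leq t_s^* - t^1$. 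Adding $t^1$ to both sides gives $t^1 + SFT(p, t^1, s) \leq t_s^* = t^2 + SFT(p, t^2, s)$, as required.

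I do not anticipate a substantive obstacle: the statement is essentially monotonicity of arrival time in departure time, analogous to the FIFO property from time-dependent shortest-path theory. The only minor subtlety is the outer minimization over visible intervals hidden inside the abused notation $SFT(p, t, s)$; by reusing the specific interval $I^*$ selected for the $t^2$-problem rather than re-optimizing at $t^1$, this reduces to a one-line comparison of the speed-limit inequality.
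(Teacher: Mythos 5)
Your proof is correct and takes essentially the same route as the paper: both arguments show that the optimal interception time for departure at $t^2$ remains achievable when departing at the earlier time $t^1$, the paper by constructing a wait-at-$p$-then-go trajectory and you by directly verifying that the optimizer $(I^*, t_s^*)$ of Problem~\ref{optprob:sft_problem} stays feasible since the speed constraint only relaxes. Your explicit handling of the infeasible ($+\infty$) case and of the outer minimization over $\text{vis}(p,s)$ is a minor tidying of details the paper leaves implicit.
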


\begin{proof}
Let
\begin{align}
    t_*^1 &= t^1 + SFT(p, t^1, s)\label{eqn:aux_var1}\\
    t_*^2 &= t^2 + SFT(p, t^2, s)\label{eqn:aux_var2}.
\end{align}
The solution to an SFT problem from $(p, t^2)$ to $s$ corresponds to an agent trajectory $\tau_{A, 2}^*$ that moves along a straight line in space from $(p, t^2)$ to $(\tau_{\text{targ}(s)}(t_*^2), t_*^2)$. Define the agent trajectory $\tau_{A, 1}^*$ as follows:
\begin{align}
    \tau_{A, 1}^*(t) &= \begin{cases}
        p, & t \in [t^1, t^2]\\
        \tau_{A, 2}^*, & t \in [t^2, t^2_*]
    \end{cases}
\end{align}
$\tau_{A, 1}^*$ starts at $(p, t^1)$ and meets $s$ at time $t_*^2$, achieving travel time
\begin{align}
    t^2_* - t^1
\end{align}
meaning that the shortest feasible travel time from $(p, t^1)$ to $s$ satisfies
\begin{align}
    SFT(p, t^1, s) \leq t^2_* - t^1\label{eqn:sft_statement}.
\end{align}
Adding $t^1$ to both sides of \eqref{eqn:sft_statement} and applying \eqref{eqn:aux_var2}, we have
\begin{align}
    t^1 + SFT(p, t^1, s) \leq t^2 + SFT(p, t^2, s).\qed
\end{align}
\end{proof}

\begin{lemma}\label{lemma:path_exists_with_optimal_prefix}
Suppose there is a path through $\tilde{G}_{vis}$ from $p$ to $s$ with finite cost. There is an optimal path $Q = (p, q^1, \dots q^{N - 1}, s)$ through $\tilde{G}_{vis}$ with $Q[:M]$ also optimal for all $M \leq N + 1$.
\end{lemma}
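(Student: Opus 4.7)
The key structural observation is that in $\tilde{G}_{vis}$ only edges entering $s$ carry time-dependent costs; every edge between position-nodes has the static cost $\|u-v\|_2/v_{max}$. Combined with Lemma \ref{lemma:start_earlier_implies_arrive_at_same_time_worst_case}, which says that arriving earlier at a fixed node never degrades the remaining cost to reach $s$, this suggests a clean construction: fix the penultimate node of some optimal path and then replace the prefix reaching it with a classical shortest path in the static position-node subgraph.

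Concretely, the plan is as follows. First, invoke Lemma \ref{lemma:feas_soln_implies_finite_cost_path_in_G_tw} to obtain an optimal path $Q^* = (p, q^{*,1}, \ldots, q^{*,N-1}, s)$ of cost $T^* - T$, and let $\hat q = q^{*,N-1}$. Since $s$ has no outgoing edges, every path from $p$ to $\hat q$ in $\tilde{G}_{vis}$ lies entirely in the position-node subgraph, whose edge costs are static and nonnegative. Standard shortest-path theory (e.g.\ a Dijkstra shortest-path tree rooted at $p$) therefore yields a shortest $p$-to-$\hat q$ path $Q' = (p, q^1, \ldots, q^{N-1})$, with $q^{N-1} = \hat q$, whose every prefix $Q'[:M]$ is itself a shortest path from $p$ to $q^{M-1}$. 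Let $Q$ be the path obtained by appending $s$ to $Q'$.

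It then remains to show that $Q$ is optimal. By \eqref{eqn:path_cost_to_mt},
\[ c_{path}(Q) = c_{path}(Q') + SFT(\hat q,\, T + c_{path}(Q'),\, s). \]
Because $Q'$ is shortest, $c_{path}(Q') \leq c_{path}(Q^*[:N])$, so the agent arrives at $\hat q$ no later along $Q'$ than along $Q^*[:N]$. The fact that the edge $(\hat q, s)$ is used by $Q^*$ forces $SFT(\hat q,\, T + c_{path}(Q^*[:N]),\, s)$ to be finite; arriving earlier only enlarges the travel budget for a straight-line interception, so $SFT(\hat q,\, T + c_{path}(Q'),\, s)$ is finite as well. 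Lemma \ref{lemma:start_earlier_implies_arrive_at_same_time_worst_case}, applied with its $p$ set to $\hat q$, then gives $c_{path}(Q) \leq c_{path}(Q^*) = T^* - T$, and optimality of $Q^*$ forces equality. Hence $Q[:N+1] = Q$ is optimal; $Q[:N] = Q'$ is optimal by construction; and for each $M \leq N$, $Q[:M] = Q'[:M]$ is optimal as a prefix of a shortest path in a static-cost graph.

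The main obstacle is the time-dependence of the final edge: shortening the prefix reaching $\hat q$ could in principle either render $s$ unreachable from $\hat q$ or fail to decrease the total cost. Both concerns dissolve once one records the simple monotonicity observation that an earlier arrival never shrinks the set of interceptable times of $s$ from $\hat q$; the remainder is bookkeeping around \eqref{eqn:path_cost_to_mt} and Lemma \ref{lemma:start_earlier_implies_arrive_at_same_time_worst_case}.
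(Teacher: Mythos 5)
Your proposal is correct and follows essentially the same route as the paper's proof: replace the prefix reaching the penultimate node $q^{N-1}$ of an optimal path by a shortest path in the static position-node subgraph, invoke Lemma \ref{lemma:start_earlier_implies_arrive_at_same_time_worst_case} together with \eqref{eqn:path_cost_to_mt} to conclude the swapped path is still optimal, and use subpath optimality of static-cost shortest paths for the remaining prefixes. The only differences are cosmetic (explicitly citing a Dijkstra tree and checking finiteness of the final $SFT$, which the paper leaves implicit).
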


\begin{proof}
Let an optimal path $Q' = (p, q^1, \dots q^{N - 1}, s)$ be fixed and arbitrary, suppose $Q'[:N]$ is suboptimal, and let $c_{path}({Q'[:N]})$ be the cost of $Q'[:N]$. Let $R$ be an optimal path from $p$ to $q^{N - 1}$. Since $Q'[:N]$ is suboptimal, we have
\begin{align}
    c_{path}(R) &< c_{path}({Q'[:N]})\\
    T + c_{path}(R) &< T + c_{path}({Q'[:N]})
\end{align}
From Lemma \ref{lemma:start_earlier_implies_arrive_at_same_time_worst_case}, we have
\begin{align}
    &\begin{aligned}
    T + &c_{path}(R) + SFT(q^{N - 1}, T + c_{path}(R), s) \leq \\
    &T + c_{path}({Q'[:N]}) + SFT(q^{N - 1}, T + c_{path}({Q'[:N]}), s)\\
    \end{aligned}\label{eqn:applied_sft_lemma}\\
    &T + c_{path}(R) + SFT(q^{N - 1}, T + c_{path}(R), s) \leq T + c_{path}(Q').\label{eqn:applied_mt_path_cost}
\end{align}
where \eqref{eqn:applied_mt_path_cost} is the result of applying \eqref{eqn:path_cost_to_mt}. Now consider a path $Q = (R, s)$, i.e. the path resulting from appending $s$ to $R$. The cost $c_{path}({Q})$ of $Q$ is
\begin{align}
    c_{path}({Q}) = c_{path}(R) + SFT(q^{N - 1}, T + c_{path}(R), s)\label{eqn:cost_of_Q}
\end{align}
Substituting \eqref{eqn:cost_of_Q} into \eqref{eqn:applied_mt_path_cost}, $c_{path}({Q}) \leq c_{path}(Q')$. Since $Q'$ is optimal, $c_{path}({Q}) \geq c_{path}(Q')$, meaning $c_{path}({Q}) = c_{path}(Q')$. Thus $Q$ is optimal. Its prefix $Q[:N] = R$ is optimal as well. Furthermore, since the edge costs in $R$ only depend on their source and destination nodes, without any dependence on an external time variable, $R$ satisfies the Markov assumption and all of its prefixes are optimal. \qed
\end{proof}

Next, we present a lemma similar to Lemma 1 from \cite{hart1968AFormalBasis}, but with an extra assumption in the lemma and additions to the proof, accounting for the fact that $\tilde{G}_{vis}$ does not satisfy the Markov assumption.

\begin{lemma}\label{lemma:open_contains_opt_node}
Let $Q = (q^0, q^1, \dots, q^N)$ be any optimal path in $\tilde{G}_{vis}$ from $p$ to $s$, with $q^0 = p$ and $q^N = s$, satisfying the following properties:
\begin{enumerate}[label=$\bullet$]
    \item $Q$ has finite cost
    \item $Q[:M]$ is optimal for all $M \leq N + 1$.
\end{enumerate}
At any iteration of Alg. \ref{alg:point_to_moving_target_planning_alg}, if $s$ is not CLOSED, there is a node $q^i$ on $Q$ in OPEN with $g(q^i) = g^*(q^i)$, where $g^*(p^i)$ is the optimal g-value of $p^i$.
\end{lemma}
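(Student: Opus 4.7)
The plan is to adapt the inductive argument of Lemma 1 in \cite{hart1968AFormalBasis}, proceeding by induction on the while-loop iteration count of Alg. \ref{alg:point_to_moving_target_planning_alg} while handling the time-dependent cost of the edge to $s$. Write $Q = (q^0, q^1, \ldots, q^{N-1}, q^N)$ with $q^0 = p$, $q^N = s$, and every prefix $Q[:M]$ optimal by hypothesis.

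For the base case, OPEN is initialized to $\{p\}$ with $g(p) = 0 = g^*(q^0)$, so $q^0$ witnesses the claim. For the inductive step, suppose the claim holds with some witness $q^i$ at the start of some iteration, let $v$ be the node popped, and suppose $s \notin$ CLOSED at the start of the next iteration. If $v \neq q^i$, then $q^i$ remains in OPEN; the only change to $g(q^i)$ could be a decrease via an edge $(v, q^i)$, but this would require $g_{cand}(q^i) < g(q^i) = g^*(q^i)$, which is impossible, so $q^i$ stays a valid witness. If $v = q^i$, then $i < N$ (since $s$ will not be in CLOSED at the next iteration), and the algorithm processes the successor $q^{i+1}$, computing $g_{cand}(q^{i+1}) = g(q^i) + \tilde{c}_{vis}(q^i, q^{i+1}, T + g(q^i))$. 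I evaluate this in two sub-cases. When $i + 1 < N$, the edge cost is the static Euclidean travel time, so optimality of $Q[:i+2]$ yields $g_{cand}(q^{i+1}) = g^*(q^i) + c(q^i, q^{i+1}) = g^*(q^{i+1})$. When $i + 1 = N$, so $q^{i+1} = s$, Eq. \eqref{eqn:path_cost_to_mt} applied to $Q$ together with optimality of $Q[:N]$ gives $g_{cand}(s) = g^*(q^{N-1}) + SFT(q^{N-1}, T + g^*(q^{N-1}), s) = c_{path}(Q) = g^*(s)$. In both sub-cases, $g_{cand}(q^{i+1}) \leq g^*(s) \leq t^f(s) - T$ by feasibility of $Q$, so the bound on Line \ref{algline:check_before_add_to_open} is satisfied, and provided $q^{i+1} \notin$ CLOSED the algorithm either inserts $q^{i+1}$ into OPEN or updates its $g$-value there, producing the new witness.

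The hard part is the subcase where $q^{i+1}$ is already in CLOSED, which would prevent re-insertion and threaten the invariant. To handle this, I would first verify that the heuristic $h$ is consistent on $\tilde{E}_{vis}$: consistency on position-to-position edges follows from the triangle inequality applied to the distance from the line segment associated with $s$, and consistency on edges into $s$ follows from $h(s) = 0$ together with $SFT(q, t, s) \geq h(q)$. Consistency implies every position-node on $Q$ was popped with its optimal $g$-value, so if $q^{i+1} \in$ CLOSED then $g(q^{i+1}) = g^*(q^{i+1})$ at the time $q^{i+1}$ was closed; the identical computation at that earlier iteration therefore placed $q^{i+2}$ into OPEN or CLOSED with its optimal $g$-value as well. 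Walking forward along $Q$, the smallest index $j > i$ with $q^j \notin$ CLOSED must then reside in OPEN with $g(q^j) = g^*(q^j)$, providing the needed witness and completing the induction.
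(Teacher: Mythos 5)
Your proposal reaches the right conclusion and its two sub-case computation of $g_{cand}$ (static edge vs.\ the time-dependent edge into $s$, fixed up via prefix-optimality and Eq.~\eqref{eqn:path_cost_to_mt}) is exactly the repair the paper also makes to the broken Markov step. But the overall route is genuinely different. The paper does not induct on iterations and never invokes consistency: following Lemma~1 of \cite{hart1968AFormalBasis}, it looks at the set $\Delta$ of CLOSED nodes on $Q$ with optimal $g$-value, takes the element $q^*$ of largest index, and argues its successor on $Q$ is the OPEN witness, splitting into the case where that successor is a position-node (deferring to \cite{hart1968AFormalBasis}, with prefix-optimality substituting for subpath optimality) and the case where it is $s$ (the explicit $SFT$ computation you also perform). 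Choosing the \emph{maximal-index} element of $\Delta$ is what lets the paper sidestep the situation you struggle with: it never needs to know anything about closed nodes beyond $q^*$. Your single-moving-witness induction, by contrast, runs head-on into the ``successor already CLOSED'' case and patches it with an extra property, consistency of $h$, plus the claim that every CLOSED position-node on $Q$ carries its optimal $g$-value, after which you walk forward to the first non-closed node of $Q$. What your route buys is a stronger intermediate picture (all closed nodes on $Q$ are optimally labeled); what it costs is an additional lemma the paper never needs.

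That additional lemma is also where your argument is thinnest. The statement ``consistency implies every position-node on $Q$ was popped with its optimal $g$-value'' is not an off-the-shelf citation here: Alg.~\ref{alg:point_to_moving_target_planning_alg} is not textbook A*, because (i) it never re-inserts CLOSED nodes and (ii) the condition $g_{cand}(v') \leq t^f(s) - T$ on Line~\ref{algline:check_before_add_to_open} prunes insertions, so one must check that no insertion along the relevant optimal (prefix) paths is ever pruned. For nodes on $Q$ this does go through, since prefix costs of $Q$ are bounded by $c_{path}(Q) = g^*(s) \leq t^f(s) - T$, but proving ``popped $\Rightarrow$ optimal $g$'' requires an invariant of essentially the same shape as the lemma you are proving, restricted to the position-node prefix of $Q$; you would have to set up that inner induction explicitly to avoid circularity. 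The consistency inequalities themselves are fine (triangle inequality for distance to the segment of $s$, and $SFT(q,t,s) \geq h(q)$ for every departure time $t$), so the gap is one of missing proof rather than a false step; still, if you want the shorter argument, adopting the paper's maximal-element device removes the need for consistency altogether.
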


\begin{proof}
Suppose $q^0$ is in OPEN. Then Lemma \ref{lemma:open_contains_opt_node} is true, because $q^0$ is on $Q$.

Suppose $q^0$ is not in OPEN, meaning it is in CLOSED, since the first expansion puts $q^0$ from OPEN into CLOSED. Let $\Delta$ be the set of closed nodes $q^i$ in $Q$ with $g(q^i) = g^*(q^i)$, and let $q^*$ be the element in $\Delta$ with largest $i$. $q^* \neq s$, because Lemma \ref{lemma:open_contains_opt_node} assumes $s$ is not in CLOSED, so $q^*$ has a successor on $Q$. Let $q'$ be the successor of $q^*$ on $Q$. We have two cases:
\begin{enumerate}[label=Case \arabic*., align=left]
    \item $q'$ is a position-node. In this case, the proof proceeds almost exactly as in the proof of Lemma 1 from \cite{hart1968AFormalBasis}, apart from one difference. \cite{hart1968AFormalBasis} claims that the optimality of $Q$ implies $g^*(q') = g^*(q^*) + \tilde{c}_{vis}(q^*, q')$. However, since path $Q$ breaks the Markov assumption, the implication does not hold. We assumed, however, that all prefixes of $Q$ are optimal, so the prefix $Q^*$ ending with $q^*$ is optimal. Therefore, the cost of $Q^*$ is $g^*(q^*)$. This means the cost of $Q' = (Q^*, q')$, i.e. the prefix of $Q$ ending in $q',$ is $g^*(q') = g^*(q^*) + \tilde{c}_{vis}(q^*, q')$. The rest of the proof proceeds as in \cite{hart1968AFormalBasis}.

    \item $q' = s$.  When we expanded $q^*$ (which must have happened for $q^*$ to be in CLOSED), we must have generated $s$ as a successor. Since $s$ is not in CLOSED, it cannot have been in CLOSED when expanding $q^*$, so if $s$ was not already in OPEN when we expanded $q^*$, we must have added $s$ to OPEN. The candidate g-value of $s$ would have been
    \begin{align}
        g_{cand}(s) &= g(q^*) + SFT(q^*, T + c_{path}(Q[:N]), s)\\
                    &= g^*(q^*) + SFT(q^*, T + c_{path}(Q[:N]), s)\\
                    &= c_{path}(Q[:N]) + SFT(q^*, T + c_{path}(Q[:N]), s)\\
                    &= c_{path}(Q).
    \end{align}
    Since $Q$ is optimal, and $g_{cand}(s)$ is the cost of $Q$, $g_{cand}(s) = g^*(s)$. Therefore at the iteration when we generated $s$ as a successor of $q^*$, we set $g(s) = g_{cand}(s)$, so $g(s) = g^*(s)$. $s$ cannot have been taken off of OPEN since $s$ is not in CLOSED, so $s = q'$ is still in OPEN with $g(q') = g^*(q')$. \qed
\end{enumerate}

\end{proof}

One might ask whether there are any optimal paths $Q$ through $\tilde{G}_{vis}$ with a suboptimal prefix $Q[:M]$, with $M < N + 1$. In the following example, we show that there are, considering $M = N$.

\begin{example}
Let $Q = (p, q^1, \dots, q^{N - 1}, q^N)$ be an optimal path in $\tilde{G}_{vis}$ from $p$ to $s$, with $q^N = s$. Consider a case where the agent can arrive at position $\tau_{\text{targ}(s)}(T^*)$ from $(p, T)$ earlier than $T^*$ while satisfying its speed limit and obstacle avoidancce constraints. In this case, the only way that $T^*$ could be the minimum interception time for $s$ is if $t^0(s) = T^*$, so if the agent arrives early, it must wait until the time window corresponding to $s$ begins. $Q$ may therefore take a suboptimal path to $q^{N - 1}$, arriving later than optimal at $q^{N - 1}$, as long as $Q$ arrives at position $\tau_{\text{targ}(s)}(T^*)$ at or before time $T^*$. In short, the optimality of $Q$ does not imply the optimality of any of its prefixes. 
\end{example}

Finally we restate Theorem \ref{theorem:optimally_solve_subproblem} from the main paper, in the context of \hyperlink{pt_mt}{\textnormal{(PT-MT)}}.

\begin{theorem}\label{theorem:feasible_and_terminate_implies_optimal}
If \hyperlink{pt_mt}{\textnormal{(PT-MT)}} is feasible, Alg. \ref{alg:point_to_moving_target_planning_alg} finds an optimal solution.
\end{theorem}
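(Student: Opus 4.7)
The plan is to reduce the theorem to a graph-search optimality claim and then carry out a contradiction argument patterned on the A* admissibility proof of \cite{hart1968AFormalBasis}. By Lemma \ref{lemma:feas_soln_implies_finite_cost_path_in_G_tw}, feasibility of (PT-MT) with optimal cost $T^*$ furnishes an optimal path $Q = (p, q^1, \dots, q^{N-1}, s)$ through $\tilde{G}_{vis}$ of cost $g^*(s) = T^* - T$, and by Lemma \ref{lemma:path_exists_with_optimal_prefix} I may assume every prefix of $Q$ is itself optimal. Termination of Alg.~\ref{alg:point_to_moving_target_planning_alg} is immediate since $\tilde{V}_{vis}$ is finite and nodes never leave \textnormal{CLOSED}, so it suffices to show that when the loop exits with $g(s) < \infty$, we have $g(s) = g^*(s)$; the \textnormal{ConstructTrajectory} step then converts $Q$ into a trajectory with arrival time $T + g(s) = T^*$, which is optimal for (PT-MT).

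I would first note that $s$ itself is never moved to \textnormal{CLOSED}: whenever $s \in \textnormal{OPEN}$, the loop guard $f(s) > \min_{v \in \textnormal{OPEN}} f(v)$ either fails (and the loop exits) or holds strictly (so a node other than $s$ is popped). Hence on termination with $g(s) < \infty$, $s \in \textnormal{OPEN}$ with $f(s) = \min_{v \in \textnormal{OPEN}} f(v)$. Now suppose for contradiction that $g(s) > g^*(s)$. Because $s \notin \textnormal{CLOSED}$, Lemma \ref{lemma:open_contains_opt_node} applied to $Q$ produces a node $q^i$ lying on $Q$ and currently in \textnormal{OPEN} with $g(q^i) = g^*(q^i)$. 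Since every prefix of $Q$ is optimal, the tail $(q^i, q^{i+1}, \ldots, q^{N-1}, s)$ corresponds to a feasible agent trajectory from $q^i$ that intercepts $s$ with duration $g^*(s) - g^*(q^i)$. Bounding Euclidean distance by speed times time, the distance from $q^i$ to the spatial segment defined by $s$ is at most $v_{max}\bigl(g^*(s) - g^*(q^i)\bigr)$, so $h(q^i) \leq g^*(s) - g^*(q^i)$ (admissibility at $q^i$). Therefore $f(q^i) \leq g^*(s) < g(s) = f(s)$, contradicting $f(s) = \min_{v \in \textnormal{OPEN}} f(v)$.

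The main obstacle is the path-dependent cost on edges of the form $(q, s)$, which breaks the textbook A* admissibility argument: in a graph with such edges, an optimal path to $s$ can in general require a suboptimal detour through an intermediate node, so one cannot in general locate an \textnormal{OPEN} node on an optimal path whose $g$-value equals $g^*$. The workaround is already packaged into Lemmas \ref{lemma:path_exists_with_optimal_prefix} and \ref{lemma:open_contains_opt_node}, which together restrict attention to optimal paths whose prefixes are also optimal and thereby restore enough Bellman substructure to produce the required $q^i$. The admissibility inequality $h(q^i) \leq g^*(s) - g^*(q^i)$ is the one place where the specific geometric structure of the heuristic and of the MTVG enters the argument; everything else is a faithful adaptation of the classical proof.
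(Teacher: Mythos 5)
Your proposal is correct and follows essentially the same route as the paper: Lemma~\ref{lemma:feas_soln_implies_finite_cost_path_in_G_tw} to translate (PT-MT) into an optimal path of cost $T^*-T$, Lemma~\ref{lemma:path_exists_with_optimal_prefix} to get prefix-optimality, and Lemma~\ref{lemma:open_contains_opt_node} plus heuristic admissibility to conclude $g(s)=g^*(s)$; the only difference is that the paper closes the argument by citing Theorem~1 of \cite{hart1968AFormalBasis}, whereas you inline that final contradiction (including the $h(q^i)\leq g^*(s)-g^*(q^i)$ check and the observation that $s$ never enters CLOSED). The one step you leave implicit is the easy direction $g(s)\geq g^*(s)$, which follows because every finite $g$-value is the cost of an actual path in $\tilde{G}_{vis}$.
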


\begin{proof}
Since \hyperlink{pt_mt}{\textnormal{(PT-MT)}} is feasible, its optimal cost $T^* - T$ is finite. Applying Lemma \ref{lemma:feas_soln_implies_finite_cost_path_in_G_tw}, a path $Q$ through $\tilde{G}_{vis}$ from $p$ to $s$ has the finite cost $T^* - T$. Since $Q$ has a finite cost, Lemma \ref{lemma:path_exists_with_optimal_prefix} holds. Lemma \ref{lemma:path_exists_with_optimal_prefix} is the assumption required for Lemma \ref{lemma:open_contains_opt_node}. Lemma \ref{lemma:open_contains_opt_node} and the admissibility of our heuristic imply the corollary to Lemma 1 in \cite{hart1968AFormalBasis}. Now Theorem 1 in \cite{hart1968AFormalBasis} informs us that Alg. \ref{alg:point_to_moving_target_planning_alg} terminates with $g(s) = g^*(s)$, i.e. it terminates with an optimal path to $s$. Applying Lemma \ref{lemma:feas_soln_implies_finite_cost_path_in_G_tw} again, an optimal path in $\tilde{G}_{vis}$ to $s$ has cost equal to the optimal cost of \hyperlink{pt_mt}{\textnormal{(PT-MT)}}. Therefore when we convert $Q$ into a trajectory, we obtain an optimal solution to \hyperlink{pt_mt}{\textnormal{(PT-MT)}}. \qed
\end{proof}

\subsection{Proof of Theorems \ref{theorem:completeness_thm_feas} and \ref{theorem:completeness_thm_infeas}}
In this section, we formally present the proofs of Theorems \ref{theorem:completeness_thm_feas} and \ref{theorem:completeness_thm_infeas} from the main paper, appearing as Theorem \ref{theorem:completeness_thm_feas_appendix} and \ref{theorem:completeness_thm_infeas_appendix}, respectively.

\begin{theorem}\label{theorem:completeness_thm_feas_appendix}
If an MT-TSP-O instance is feasible, Alg. \ref{alg:dfs} finds a feasible solution.
\end{theorem}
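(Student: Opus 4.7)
The plan is to establish two things: (i) Alg.~\ref{alg:dfs} terminates, and (ii) if the instance is feasible, termination occurs by returning a feasible solution on Line~\ref{algline:return_mt_tsp_o_soln} rather than on Line~\ref{algline:term_infeas}. Termination is straightforward: every tree-node on the stack has a window-node sequence $S$ of length at most $N_\tau+2$, since successive window-nodes in $S$ must correspond to distinct targets and only the depot may appear twice. The branching at each node is bounded by $|V_{tw}|$. Hence the trajectory tree is finite, and DFS on a finite tree terminates.

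For the second claim, I would proceed by induction on the sequence of pops from the stack, maintaining the invariant that whenever the stack is non-empty, some tree-node $(S,\tau_A,T)$ on it is a \emph{prefix} of some feasible MT-TSP-O solution. The base case is trivial: the initial entry $(s_d,\text{NULL},0)$ is a prefix of every feasible solution, since any feasible solution starts at the depot at time $0$. For the inductive step, suppose the stack is non-empty and $(S,\tau_A,T)$, a prefix of a feasible solution $(\tau_A^*,T^*)$, is popped. If $S$ already ends in $s_d$ and contains one window-node per target, then $(\tau_A,T)$ itself is feasible and Line~\ref{algline:return_mt_tsp_o_soln} returns a solution. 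Otherwise, let $s'$ be the window-node intercepted earliest by $\tau_A^*$ but not by $\tau_A$, at time ${T^*}'$; this exists because the remaining segment of $\tau_A^*$ intercepts at least one target or returns to the depot. I would then verify that $s'$ satisfies the three successor conditions from Section~\ref{subsec:feas_trj_gen}: condition~2 holds because $\tau_A^*$ visits each target at most once, condition~3 holds by choice of $s'$, and condition~1 ($T\le LFDT(s,s')$) follows because $\tau_A^*$ itself witnesses an agent trajectory departing from $s$ at a time no later than $LFDT(s,s')$ and intercepting $s'$.

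Next, apply Theorem~\ref{theorem:optimally_solve_subproblem} to the point-to-moving-target subproblem from $(p,T)$ with $p=\tau_A(T)$ to $s'$: since $\tau_A^*$ provides a feasible trajectory for this subproblem with arrival time ${T^*}'$, Alg.~\ref{alg:point_to_moving_target_planning_alg} returns an optimal trajectory $\bar{\tau}_A'$ with arrival time $T'\le {T^*}'$. I would then construct the witness $(\tau_A^{**},T^*)$ as described in the main paper: follow $\tau_A$ on $[0,T]$, then $\bar{\tau}_A'$ on $[T,T']$, then $\tau_{\text{targ}(s')}$ on $[T',{T^*}']$, then $\tau_A^*$ on $[{T^*}',T^*]$. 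Feasibility of this concatenation uses the assumption that within target $\text{targ}(s')$'s time window, the target moves no faster than $v_{max}$ and avoids obstacle interiors, which legitimizes the waiting segment. The concatenation intercepts every target and respects all window constraints, so $(\tau_A^{**},T^*)$ is a feasible MT-TSP-O solution, and $(\tau_A',T')$ (obtained by concatenating $\tau_A$ with $\bar{\tau}_A'$) is a prefix of it.

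The final step is to argue that the corresponding tree-node $(S',\tau_A',T')$ is actually pushed onto the stack, which requires that the Lookahead check on Line~\ref{algline:fail_check} does not reject it. The point I expect to require the most care is this one: I need to show that for every target $i\in\Gamma_{unvisited}$, inequality~\eqref{eqn:lookahead_condition} fails. Because $(\tau_A^{**},T^*)$ is feasible, it intercepts each such target $i$ at some window-node $s''$ with target $i$ at time $t_i\ge T'$, so an agent trajectory exists departing $s'$ by time $T'$ and arriving at $s''$ by its window's end; by definition this means $T'\le LFDT(s',s'')$, which negates \eqref{eqn:lookahead_condition}. Hence $(S',\tau_A',T')$ is added to the SUCCESSORS list, pushed onto the stack, and preserves the invariant. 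Combining termination with the invariant, Alg.~\ref{alg:dfs} cannot reach Line~\ref{algline:term_infeas} with an empty stack if the instance is feasible, so it must return a feasible solution on Line~\ref{algline:return_mt_tsp_o_soln}. \qed
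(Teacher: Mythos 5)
Your proposal follows essentially the same route as the paper's proof: termination from the finiteness of the trajectory tree, plus an induction maintaining that some tree-node on the stack is a prefix of a feasible solution, with the same choice of the earliest-intercepted missing window-node $s'$, the same invocation of Theorem~\ref{theorem:optimally_solve_subproblem} to get $T'\le {T^*}'$, and the same spliced witness $(\tau_A^{**},T^*)$ built by following $\tau_{\text{targ}(s')}$ from $T'$ to ${T^*}'$. You in fact go a bit further than the paper by explicitly checking the successor-window-node conditions and the Lookahead test on Line~\ref{algline:fail_check}; the only shaky clause is justifying condition~2 by ``$\tau_A^*$ visits each target at most once,'' which feasibility does not guarantee --- the clean repair (which also tightens the paper's own argument) is to choose $s'$ as the earliest window-node intercepted by $\tau_A^*$ among targets not appearing in $S$, which simultaneously supplies the interception-time bound $t_i\ge T'$ you use to negate \eqref{eqn:lookahead_condition}.
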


\begin{proof}
First we show that Alg. \ref{alg:dfs} terminates. Then we show that Alg. \ref{alg:dfs} terminates by returning a feasible MT-TSP-O solution, as opposed to terminating on Line \ref{algline:term_infeas} due to an empty stack. Termination is guaranteed because Alg. \ref{alg:dfs}'s search tree is finite. We show that termination is not caused by an empty stack via an induction argument proving that there is always a prefix of a feasible MT-TSP-O solution on the stack when we check for stack emptiness on Line \ref{algline:check_stack_empty} at the beginning of a loop iteration.

\underline{Base Case}: On Line \ref{algline:init_stack}, we add the NULL trajectory to the stack, which is simply a trajectory $\tau_A$ satisfying $\tau_A(0) = p_d$. $(\tau_A, 0)$ is a prefix of all feasible MT-TSP-O solutions. A feasible solution exists by the assumption of the theorem, so $(\tau_A, 0)$ is a prefix of at least one feasible solution.

\underline{Induction Hypothesis}: Suppose $(S, \tau_A, T)$ is on the stack during some loop iteration with $(\tau_A, T)$ a prefix of a feasible MT-TSP-O solution $(\tau_A^*, T^*)$.

\underline{Induction Step}: If we do not pop $(S, \tau_A, T)$, the induction hypothesis is trivially still satisfied when we check Line \ref{algline:check_stack_empty} at the next loop iteration. Suppose we pop $(S, \tau_A, T)$. Let $p = \tau_A(T)$. Of the window-nodes intercepted by $\tau_A^*$ but not $\tau_A$, let $s'$ be the window-node intercepted earliest. Since $\tau_A^*$ travels feasibly from $(p, T)$ to $s'$, arriving at some time ${T^*}'$, Theorem \ref{theorem:optimally_solve_subproblem} implies that Alg. \ref{alg:point_to_moving_target_planning_alg} generates a trajectory $\bar{\tau}_A'$ from $(p, T)$ to $s'$, arriving with $T' \leq {T^*}'$. Under the assumptions that the targets move no faster than the agent's maximum speed and do not enter the interior of obstacles during their time windows, it is feasible for an agent trajectory to follow $\tau_{\text{targ}(s')}$ from $T'$ to ${T^*}'$. Therefore we can construct a feasible MT-TSP-O solution $(\tau_A^{**}, T^*)$ by having the agent follow $\tau_A$ until time $T$, follow $\bar{\tau}_A'$ until $T'$, follow $\tau_{\text{targ}(s')}$ from $T'$ to ${T^*}'$, and follow $\tau_A^*$ from ${T^*}'$ to $T^*$. If $s' = s_d$, we return a feasible solution on Line \ref{algline:return_mt_tsp_o_soln} and we never check Line \ref{algline:check_stack_empty} again. If $s' \neq s_d$, we push $(S', \tau_A', T')$ onto the stack, where $\tau_A'$ is the concatenation of $\tau_A$ with $\bar{\tau}_A'$, and $(\tau_A', T')$ is a prefix of $(\tau_A^{**}, T^*)$.

By induction, there is always a prefix of a feasible MT-TSP-O solution on the stack. As we stated at the beginning of the proof, this guarantees that Alg. \ref{alg:dfs} terminates and returns a feasible solution.
\end{proof}

\begin{theorem}\label{theorem:completeness_thm_infeas_appendix}
If MT-TSP-O is infeasible, Alg. \ref{alg:dfs} terminates and reports infeasible in finite time.
\end{theorem}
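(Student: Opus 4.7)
The plan is to split the termination claim into two parts. First, I would show that every call to Alg.~\ref{alg:point_to_moving_target_planning_alg} terminates in finite time, including calls made during LFDT computations that build $G_{tw}$ before Alg.~\ref{alg:dfs} runs, some of which may correspond to infeasible point-to-moving-target subproblems. Second, I would show that the outer DFS in Alg.~\ref{alg:dfs} explores only a finite search tree. Combining these, under infeasibility of the MT-TSP-O the main loop of Alg.~\ref{alg:dfs} exits in finite time; because any return on Line~\ref{algline:return_mt_tsp_o_soln} would exhibit a feasible solution, the algorithm must instead reach Line~\ref{algline:term_infeas} and report infeasible, as required.

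The main obstacle is termination of Alg.~\ref{alg:point_to_moving_target_planning_alg} on an infeasible subproblem, since the cost of an edge into $s$ depends on the arrival time at its source and so A$^*$'s standard termination argument does not apply verbatim. Following the author's sketch, I would use the condition $g_{cand}(v') \leq t^f(s) - T$ on Line~\ref{algline:check_before_add_to_open} to restrict the search to tentative paths of cost at most $t^f(s) - T$. Since $\tilde{G}_{vis}$ is a finite graph whose edges between position-nodes carry time-independent non-negative costs, the set of distinct path costs to any position-node is finite. The strict test $g_{cand}(v') < g(v')$ then ensures that each position-node is inserted into OPEN at most finitely many times, and because $g(s)$ is computed only from a finite collection of predecessor g-values through a finite list of candidate intervals $\mathrm{vis}(q,s)$, $g(s)$ is updated finitely often as well. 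Plugging the cost bound into the termination argument of A$^*$ from \cite{hart1968AFormalBasis} yields that OPEN is exhausted after finitely many expansions; if $g(s) = \infty$ at that moment, the INFEASIBLE branch on Line~\ref{eqn:return_infeasible} is returned.

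For Alg.~\ref{alg:dfs}, the trajectory tree is finite because each tree-node is labelled by a window-node sequence that by the successor definition contains at most one window-node per target. Since each target has finitely many time windows and there are $N_\tau$ targets, the set of such sequences is finite, and every tree-node is pushed and popped at most once, so the while-loop executes finitely many times. Combined with finite termination of each inner call to Alg.~\ref{alg:point_to_moving_target_planning_alg}, Alg.~\ref{alg:dfs} runs in finite time. Under infeasibility of the MT-TSP-O, Line~\ref{algline:return_mt_tsp_o_soln} is never triggered, so the loop ends with an empty stack and the algorithm returns INFEASIBLE on Line~\ref{algline:term_infeas} in finite time, completing the proof.
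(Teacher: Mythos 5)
Your proposal is correct and follows essentially the same route as the paper: the finite trajectory tree guarantees termination of Alg.~\ref{alg:dfs}, infeasibility rules out a return on Line~\ref{algline:return_mt_tsp_o_soln}, and the condition $g_{cand}(v') \leq t^f(s) - T$ on Line~\ref{algline:check_before_add_to_open} is used to bound the search in Alg.~\ref{alg:point_to_moving_target_planning_alg} so that Hart et al.'s termination argument applies to the infeasible LFDT queries. The only nuance is that your claim that ``the set of distinct path costs to any position-node is finite'' needs the strictly positive lower bound $\delta$ on inter-position edge costs (as in the paper's bound $M = \frac{t^f(s)-T}{\delta}$) rather than mere non-negativity, which holds here since distinct position-nodes have distinct positions.
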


\begin{proof}
Alg. \ref{alg:dfs}'s search tree is finite, so Alg. \ref{alg:dfs} must terminate. It cannot terminate by returning a feasible solution on Line \ref{algline:return_mt_tsp_o_soln} because no feasible solution exists, so it must terminate on Line \ref{algline:term_infeas} and report infeasible. 

Recall that before executing Alg. \ref{alg:dfs}, we must compute the LFDT from every window-node to every other window-node, using Alg. \ref{alg:point_to_moving_target_planning_alg}. We must therefore ensure that Alg. \ref{alg:point_to_moving_target_planning_alg} reports infeasible in finite time for an infeasible query $(p, T, s, G_{vis}, \Lambda_{vis})$. Line \ref{algline:check_before_add_to_open} prevents Alg. \ref{alg:point_to_moving_target_planning_alg} from adding nodes $v'$ into OPEN with $g(v) + \tilde{c}_{vis}(v, v', T + g(v)) > t^f(s) - T$, where $v$ is a predecessor of $v'$. Therefore the search will never add nodes to OPEN more than $M = \frac{t^f(s) - T}{\delta}$ steps away from $p$, where $\delta$ is a lower bound on edge costs in $\tilde{G}_{vis}$. We can use the upper bound $M$ with the proof of Theorem 1, Case 2 in \cite{hart1968AFormalBasis} to prove that for infeasible queries, Alg. \ref{alg:point_to_moving_target_planning_alg} will terminate and report infeasible.
\end{proof}

\end{subappendices}

\end{document}